\newcommand{\eat}[1]{}
\newtheorem{theorem}{Theorem}
\newtheorem{definition}{Definition}
\newtheorem{remark}{Remark}
\def\eqref#1{equation~\ref{#1}}
\def\1{\bm{1}}
\def\vb{{\bm{b}}}
\def\vd{{\bm{d}}}
\def\ve{{\bm{e}}}
\def\vh{{\bm{h}}}
\def\vu{{\bm{u}}}
\def\vx{{\bm{x}}}
\def\mA{{\bm{A}}}
\def\mC{{\bm{C}}}
\def\mD{{\bm{D}}}
\def\mE{{\bm{E}}}
\def\mH{{\bm{H}}}
\def\mI{{\bm{I}}}
\def\mL{{\bm{L}}}
\def\mP{{\bm{P}}}
\def\mW{{\bm{W}}}
\def\mX{{\bm{X}}}
\def\mY{{\bm{Y}}}
\DeclareMathAlphabet{\mathsfit}{\encodingdefault}{\sfdefault}{m}{sl}
\SetMathAlphabet{\mathsfit}{bold}{\encodingdefault}{\sfdefault}{bx}{n}
\def\gG{{\mathcal{G}}}
\def\gM{{\mathcal{M}}}
\def\gN{{\mathcal{N}}}
\def\gO{{\mathcal{O}}}
\def\sV{{\mathbb{V}}}
\newcommand{\E}{\mathbb{E}}
\newcommand{\R}{\mathbb{R}}
\definecolor{mypink}{cmyk}{0, 0.7808, 0.4429, 0.1412}
\begin{document}
%
\title{Tackling Over-Smoothing for General Graph Convolutional Networks}
%
%
%
%

\author{Wenbing Huang$^\ast$, Yu Rong$^\ast$, Tingyang Xu, Fuchun Sun, Junzhou Huang
\IEEEcompsocitemizethanks{
\IEEEcompsocthanksitem Wenbing Huang (hwenbing@126.com) and Fuchun Sun (fcsun@mail.tsinghua.edu.cn) are with Beijing National Research Center for Information Science and Technology (BNRist), State Key Lab on Intelligent Technology and Systems, Department of Computer Science and Technology, Tsinghua University. \protect\\
\IEEEcompsocthanksitem Yu Rong (yu.rong@tencent.com), Tingyang Xu (tingyangxu@tencent.com), and Junzhou Huang (joehhuang@tencent.com) are with Tencent AI Lab, Shenzhen, China.\protect\\
\IEEEcompsocthanksitem Wenbing Huang and Yu Rong contributed to this work equally. \protect \\
\IEEEcompsocthanksitem Fuchun Sun is the corresponding author. \protect \\
}
}

%
%

\markboth{IEEE Transactions on Pattern Analysis and Machine Intelligence,~Vol.~14, No.~8, August~2015}%
{Shell \MakeLowercase{\textit{et al.}}: Bare Demo of IEEEtran.cls for Computer Society Journals}
%



\IEEEtitleabstractindextext{%
\begin{abstract}
Increasing the depth of GCN, which is expected to permit more expressivity, is shown to incur performance detriment especially on node classification. The main cause of this lies in over-smoothing. The over-smoothing issue
drives the output of GCN towards a space that contains limited distinguished information among nodes, leading to poor expressivity. Several works on refining the architecture of deep GCN have been proposed, but it is still unknown in theory whether or not these refinements are able to relieve over-smoothing. In this paper, we first theoretically analyze how general GCNs act with the increase in depth, including generic GCN, GCN with bias, ResGCN, and APPNP. We find that all these models are characterized by a universal process: all nodes converging to a cuboid. Upon this theorem, we propose DropEdge to alleviate over-smoothing by randomly removing a certain number of edges at each training epoch. Theoretically, DropEdge either reduces the convergence speed of over-smoothing or relieves the information loss caused by dimension collapse. Experimental evaluations on simulated dataset have visualized the difference in over-smoothing between different GCNs. Moreover, extensive experiments on several real benchmarks support that DropEdge consistently improves the performance on a variety of both shallow and deep GCNs. 
\end{abstract}

\begin{IEEEkeywords}
Graph Convolutional Networks, Over-Smoothing, DropEdge, Node Classification.
\end{IEEEkeywords}}

\maketitle

\IEEEdisplaynontitleabstractindextext

%
\IEEEpeerreviewmaketitle

\IEEEraisesectionheading{\section{Introduction}\label{sec:introduction}}
\IEEEPARstart{P}lenty of data are in the form of graph structures, where a certain number of nodes are irregularly related via edges. Examples include social networks~\cite{Kipf2017}, knowledge bases~\cite{ren2019query2box}, molecules~\cite{duvenaud2015convolutional}, scene graphs~\cite{xu2017scene}, etc.
Learning on graphs is crucial, not only for the analysis of the graph data themselves, but also for general data forms as graphs deliver strong inductive biases to enable relational reasoning and combinatorial generalization~\cite{battaglia2018relational}.
Recently, Graph Neural Network (GNN)~\cite{Wu2019} has become the most desired tool for the purpose of graph learning. The initial motivation of inventing GNNs is to generalize the success of Neural Networks (NNs) from tabular/grid data to the graph domain.


The key spirit in GNN is that it exploits recursive neighborhood aggregation function to combine the feature vector from a node as well as its neighborhoods until a fixed number of iterations $d$ (\emph{a.k.a.} network depth). Given an appropriately defined aggregation function, such message passing is proved to capture the structure around each node within its $d$-hop neighborhoods, as powerful as the Weisfeiler-Lehman (WL) graph isomorphism test~\cite{weisfeiler1968wltest} that is known to distinguish a broad class of graphs~\cite{xu2018powerful}. In this paper, we are mainly concerned with Graph Convolutional Networks (GCNs)~\cite{bruna2013spectral,Kipf2017,chen2018fastgcn,hamilton2017inductive,Klicpera2019,Huang2018,xu2018representation}, a central family of GNN that extends the convolution operation from images to graphs. GCNs have been employed successfully for the task of node classification which is the main focus of this paper.

As is already well-known in vision, the depth of Convolutional Neural Network (CNN) plays a crucial role in performance. 
Inspired from the success of CNN, one might expect to enable GCN with more expressivity to characterize richer neighbor topology by stacking more layers. 
Another reason of developing deep GCN stems from that characterizing graph topology requires sufficiently deep architectures. The works by~\cite{dehmamy2019understanding} and~\cite{loukas2019graph} have shown that GCNs are unable to learn a graph moment or estimate certain graph properties if the depth is restricted.

However, the expectation of formulating deep and expressive GCN is \textbf{not} easy to meet. This is because deep GCN actually suffers from the detriment of expressive power mainly caused by \emph{over-smoothing}~\cite{Li2018}.
An intuitive notion of over-smoothing is that the mixture of neighborhood features by graph convolution drives the output of an infinitely-deep GCN towards a space that contains limited distinguished information between nodes. From the perspective of training, over-smoothing erases important discriminative information from the input, leading to pool trainablity. We have conducted an example experiment in Figure~\ref{fig.compare}, in which the training of a deep GCN is observed to converge poorly. 

Several attempts have been proposed to explore how to build deep GCNs~\cite{Kipf2017,Xu2018,Klicpera2019,li2019can}. Nevertheless, none of them delivers sufficiently expressive architecture, and whether or not these architectures are theoretically guaranteed to prevent (or at least relieve) over-smoothing is still unclear. Li \emph{et al.} initially  linearized GCN as Laplacian smoothing and found that the features of vertices within each connected component of the graph will converge to the same values.
Putting a step forward from~\cite{Li2018}, Oono \& Suzuki~\cite{oono2019asymptotic} took both non-linearity (ReLU function) and convolution filters into account, and proved GCN converges to a subspace formulated with the bases of node degrees, but this result is limited to generic GCN~\cite{Kipf2017} without discussion of other architectures.

Hence, it remains open to answer, \emph{why and when, in theory, does over-smoothing happen for a general family of GCNs?} and \emph{can we, to what degree, derive a general mechanism to address over-smoothing and recover the expressive capability of deep GCNs?}

\begin{figure}[t!]
\centering
\includegraphics [width=0.24\textwidth]{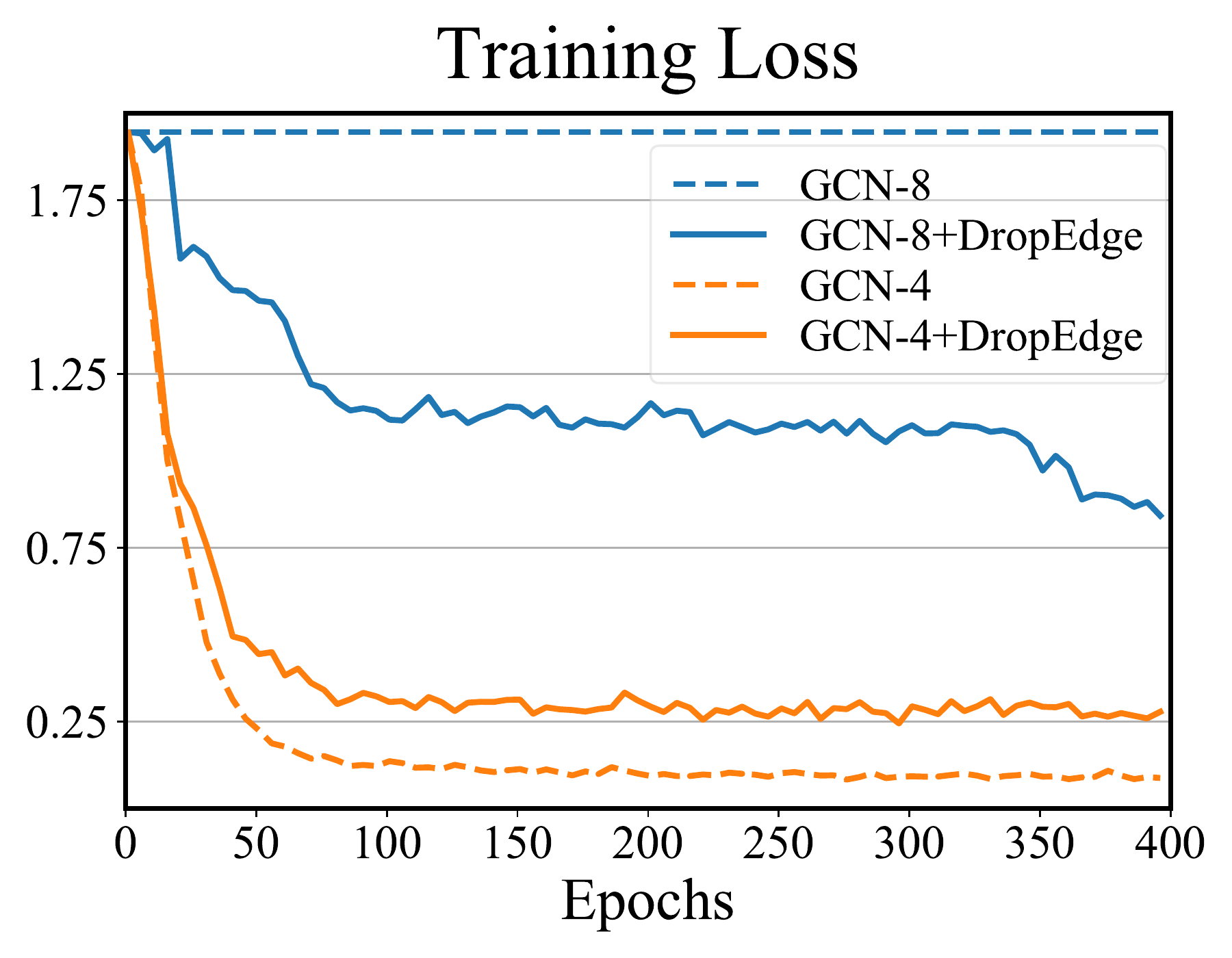}
\includegraphics [width=0.24\textwidth]{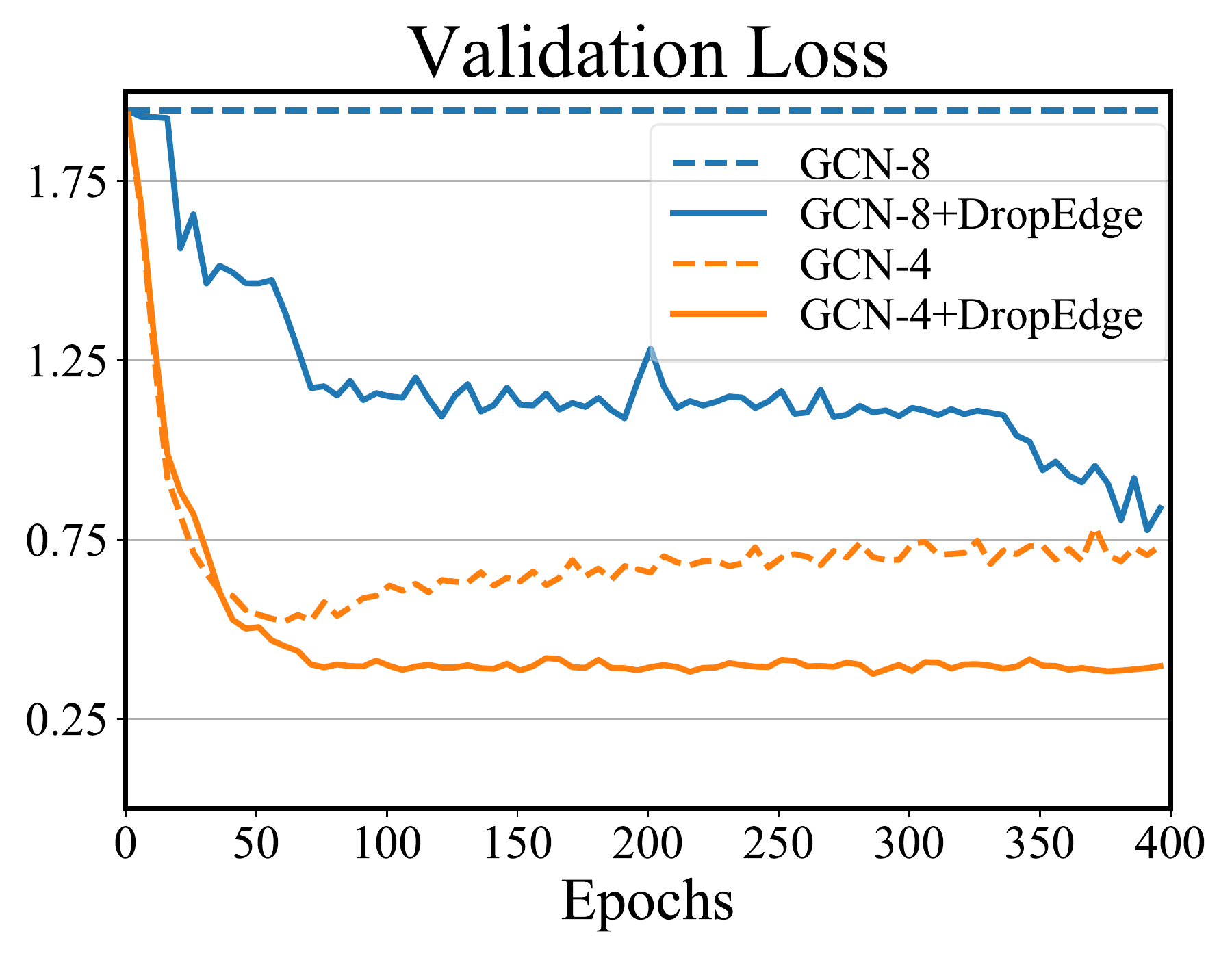}
\vskip -0.1in
\caption{Performance of GCNs on Cora.
We implement 4-layer and 8-layer GCNs w and w/o DropEdge. GCN-4 gets stuck in the over-fitting issue attaining low training error but high validation error; the training of GCN-8 fails to converge satisfactorily due to over-smoothing. By applying DropEdge, both GCN-4 and GCN-8 work well for both training and validation. Note that GCNs here have no bias.}
\label{fig.compare}
\end{figure}

To this end, we first revisit the concept of over-smoothing in a general way. Besides generic GCN~\cite{Kipf2017}, we explore GCN with bias~\cite{dehmamy2019understanding} that is usually implemented in practice, ResGCN~\cite{Kipf2017} and APPNP~\cite{Klicpera2019} that refine GCN by involving skip connections. We mathematically prove, if we go with an infinite number of layers, all these models will converge to a \emph{cuboid} that expands the subspace proposed by~\cite{oono2019asymptotic} up to a certain radius $r$. Such theoretical finding is interesting and refreshes current results by~\cite{Li2018, oono2019asymptotic} in several aspects. First, converging to the cuboid implies converging to the subspace, but not vice verse. Second, unlike existing methods~\cite{Li2018, oono2019asymptotic} that focus on GCN without bias, our conclusion shows that adding the bias leads to non-zero radius, which, interestingly, will somehow impede over-smoothing. Finally, our theorem suggests that ResGCN slows down over-smoothing and APPNP always maintains certain input information, both of which are consistent with our instinctive understandings, yet not rigorously explored before.

Over-smoothing towards a cuboid rather than a subspace, albeit not that bad, still restricts expressive power and requires to be alleviated.
In doing so, we propose DropEdge.  The term ``DropEdge'' refers to randomly dropping out certain rate of edges of the input graph for each training time. In its particular form, each edge is independently dropped with a fixed probability $p$, with $p$ being a hyper-parameter and determined by validation. There are several benefits in applying DropEdge for the GCN training (see the experimental improvements by DropEdge in Fig.~\ref{fig.compare}). First, DropEdge can be treated as a message passing reducer. In GCNs, the message passing between adjacent nodes is conducted along edge paths. Removing certain edges is making node connections more sparse, and hence avoiding over-smoothing to some extent when GCN goes very deep. Indeed, as we will draw theoretically in this paper, DropEdge either slows down the degeneration speed of over-smoothing or reduces information loss. 

Anther merit of DropEdge is that it can be considered as a data augmentation technique as well. By DropEdge, we are actually generating different random deformed copies of the original graph; as such, we augment the randomness and the diversity of the input data, thus better capable of preventing over-fitting. It is analogous to performing random rotation, cropping, or flapping for robust CNN training in the context of images.
Note that DropEdge is related to  the random graph generation methods (such as the Erdos-Renyi (ER) model~\cite{erdos1960evolution} or sparse graph learning approaches (such as GLASSO~\cite{friedman2008sparse}) in terms of edge modification. Nevertheless, DropEdge removes different subset of edges for different training iteration according to the uniform distribution, while the ER model or GLASSO  employ the same graph for all training iterations once the random/sparser graph is created.
This is why DropEdge is able to alleviate over-smoothing for each training iteration, but,  during the whole training phrase, it still preserves the full information of  the  underlying  graph  kernel in a probabilistic sense. 

We provide a complete set of experiments to verify our conclusions related to our rethinking on over-smoothing and the efficacy of DropEdge on four benchmarks of node classification. In particular, our DropEdge---as a flexible and general technique---is able to enhance the performance of various popular backbone networks, including GCN~\cite{Kipf2017}, ResGCN~\cite{li2019can}, JKNet~\cite{Xu2018}, and APPNP~\cite{Klicpera2019}. It demonstrates that DropEdge consistently improves the performance on a variety of both shallow and deep GCNs.
Complete details are provided in \textsection~\ref{sec:exps}.

To sum up, our contributions are as follows.
\begin{itemize}
    \item We study the asymptotic behavior for the output of general deep GCNs (generic GCN, GCN with bias, ResGCN, and APPNP) {with the involvement of non-linearity}. We theoretically show that these GCNs will converge to a cuboid with infinite layers stacked. 
    \item We propose DropEdge, a novel technique that uniformly drops a certain number of edges during each training iteration, which is proved to relieve the over-smoothing issue of general deep GCNs in terms of slowing down the convergence speed or decreasing the information loss.
    \item Experiments on four node classification benchmarks support the rationality of our proposed theorems and indicates that DropEdge is able to enhance a variety of GCNs for both shallow and deep variants.  
\end{itemize}

\section{Related Work}
\textbf{GCNs.}
The first prominent research on GCNs is presented in \cite{bruna2013spectral}, which develops graph convolution based on both the spectral and spatial views. Later, \cite{Kipf2017,defferrard2016convolutional,henaff2015deep,Li2018a,Levie2017} apply improvements, extensions, and approximations on spectral-based GCNs. To address the scalability issue of spectral-based GCNs on large graphs, spatial-based GCNs have been rapidly developed~\cite{hamilton2017inductive,Monti2017,niepert2016learning,Gao2018}. Recently, several sampling-based methods have been proposed for fast graph representation learning, including the node-wise sampling methods~\cite{hamilton2017inductive}, the layer-wise approaches~\cite{chen2018fastgcn,Huang2018}, and the graph-wise methods~\cite{chiang2019clustergcn,zeng2020graphsaint}. Specifically, GAT~\cite{DBLP:journals/corr/abs-1710-10903} has discussed applying dropout on edge attentions. While it actually is a post-conducted version of DropEdge before attention computation, the relation to over-smoothing is never explored in~\cite{DBLP:journals/corr/abs-1710-10903}. In our paper, however, we have formally presented the formulation of DropEdge and provided rigorous theoretical justification of its benefit in alleviating over-smoothing.

\textbf{Deep GCNs.}
Despite the fruitful progress, most previous works only focus on shallow GCNs while the deeper extension is seldom discussed. The attempt for building deep GCNs is dated back to the  GCN paper~\cite{Kipf2017}, where the residual mechanism is applied; unexpectedly, as shown in their experiments, residual GCNs still perform worse when the depth is 3 and beyond.
The authors in~\cite{Li2018} first point out the main difficulty in constructing deep networks lying in over-smoothing, but unfortunately, they never propose any method to address it. The follow-up study~\cite{Klicpera2019} solves over-smoothing by using personalized PageRank that additionally involves the rooted node into the message passing loop.
JKNet~\cite{Xu2018} employs dense connections for multi-hop message passing which is compatible with DropEdge for formulating deep GCNs. Recently, DAGNN~\cite{liu2020towards} refines the architecture of GCN by first decoupling the representation transformation from propagation and then utilizing an adaptive adjustment mechanism to balance the information from local and global neighborhoods for each node.

The authors in~\cite{oono2019asymptotic} theoretically prove that the node features of deep GCNs will converge to a subspace and incur information loss. It generalizes the conclusion in~\cite{Li2018} by considering the ReLU function and convolution filters. In this paper, we investigate the over-smoothing behaviors of a broader class of GCNs, and show that general GCNs will converge to a cuboid other than a subspace. Chen et al.~\cite{chen2020measuring} develop a measurement of over-smoothing based on the conclusion of~\cite{Li2018} and propose to relieve over-smoothing by using a supervised optimization-based method, while our DropEdge is proved to alleviate general GCNs by just random edge sampling, which is simple yet effective.  Other recent studies to prevent over-smoothing resort to activation normalization~\cite{zhao2019pairnorm} and doubly residual connections~\cite{chensimple}, which are complementary with our DropEdge.
A recent method ~\cite{li2019can} has incorporated residual layers, dense connections and dilated convolutions into GCNs to facilitate the development of deep architectures,
where over-smoothing is not discussed. 

{
\textbf{Other related works.}
In DropEdge, the idea of removing edges from the input graph is similar but distinct from the sparse graph learning methods~\cite{friedman2008sparse,egilmez2016graph}.  By DropEdge, we are NOT implying that the edges of the underlying graph kernel are sufficient uninformative; instead, DropEdge still preserves this kind of information, as it acts \textbf{in a random yet unbiased way}. We will provide more explanations in \textsection~\ref{sec:methodology} and evaluations in \textsection~\ref{sec:glasso}. The well-known Perron-Fronbenius Theorem (PFT)~\cite{pillai2005perron} and spectral graph theory~\cite{chung1997spectral} have characterized the convergence behavior of random walks on graphs. However, these results are not directly applicable for our case as the adjacency here is augmented with self-loops, and the model is more complicated beyond the random walk, \emph{e.g.} with the involvement of non-linearity. 

}

\section{Preliminaries}

\subsection{Graph denotations and the spectral analysis.}
{
Let $\gG=(\sV, \mathcal{E})$ represent the input graph of size $N$ with nodes $v_i\in\sV$ and edges $(v_i, v_j)\in\mathcal{E}$. We denote by $\mX=\{\vx_1,\cdots,\vx_N\}\in\R^{N\times C}$ the node features, and by $\mA\in\R^{N\times N}$ the adjacency matrix where the element $\mA(i,j)$ returns the weight of each edge $(v_i, v_j)$. The node degrees are given by $\vd=\{d_1,\cdots,d_N\}$ where $d_i$ computes the sum of edge weights connected to node $i$. We define $\mD$ as the degree matrix whose diagonal elements are obtained from $\vd$. Following the previous researches~\cite{Kipf2017,Xu2018,Klicpera2019}, the edge weights are supposed to be non-negative and only capture the similarity between nodes instead of their negative correlations. 
}

As we will introduce later, GCN~\cite{Kipf2017} applies the normalized augmented adjacency by adding self-loops followed by augmented degree normalization, which results in $\hat{\mA}=\hat{\mD}^{-1/2}(\mA+\mI)\hat{\mD}^{-1/2}$, where $\hat{\mD}=\mD+\mI$. We define the augmented normalized Laplacian~\cite{oono2019asymptotic} as $\hat{\mL}=\mI-\hat{\mA}$. By setting up the relation with the spectral theory of generic Laplacian~\cite{chung1997spectral}, Oono \& Suzuki~\cite{oono2019asymptotic} derive the spectral for the augmented normalized Laplacian and its adjacency thereby. We summarize the result as follows.

\begin{theorem}[Augmented Spectral Property~\cite{oono2019asymptotic}]
\label{th:spectral}
Since $\hat{\mA}$ is symmetric, let $\lambda_1\le\cdots\le\lambda_N$ be the real eigenvalues of $\hat{\mA}$, sorted in an ascending order. Suppose the multiplicity of the largest eigenvalue $\lambda_N$ is $M$, \emph{i.e.}, $\lambda_{N-M}<\lambda_{N-M+1}=\cdots=\lambda_N$. Then we have:
\begin{itemize}
    \item $-1<\lambda_1,\lambda_{N-M}<1$;
    \item $\lambda_{N-M+1}=\cdots=\lambda_N=1$;
    \item $M$ is given by the number of connected components in $\gG$, and  $\hat{\ve}_m\coloneqq\hat{\mD}^{1/2}\vu_m$ is the eigenvector associated with eigenvalue $\lambda_{N-M+m}$ where $\vu_m\in\R^{N}$ is the indicator of the $m$-th connected component, \emph{i.e.}, $\vu_m(i)=1$ if node $i$ belongs to the $m$-th component and $\vu_m(i)=0$ otherwise.
\end{itemize}
\end{theorem}
{
Theorem 1 focuses on the eigenvalues of the the normalized augmented adjacency $\hat{\mA}$. The well-known Perron-Fronbenius Theorem (PFT)~\cite{pillai2005perron} states that for a row-stochastic, irreducible probability matrix $\mP$, the spectral radius is exactly 1, and all other eigenvalues have magnitude strictly smaller than 1. Yet, this result can not be applied directly since $\hat{\mA}$ is not a irreducible stochastic matrix. The conclusion by~\cite{chung1997spectral} is applicable for the spectral analysis of the adjacency matrix $\mA$, but it has not taken the augmented self-loops (\emph{i.e.} $\mA+\mI$) into account. Theorem 1 rigorously generalizes the conclusion by~\cite{chung1997spectral} from $\mA$ to its augmented version $\hat{\mA}$. 
}

\subsection{Variants of GCN}
\label{sec:gcns}
Here, we introduce several typical variants of GCN.

\textbf{Generic GCN.} 
As originally developed by~\cite{Kipf2017}, the feed forward propagation in GCN is recursively conducted as
\begin{eqnarray}
\label{Eq:gcn}
\mH_{l+1} &=& \sigma\left(\hat{\mA}\mH_{l}\mW_{l}\right),
\end{eqnarray}
where $\mH_{l}=\{\vh_{1,l},\cdots,\vh_{N,l}\}$ are the hidden vectors of the $l$-th layer with $\vh_{i,l}$ being the hidden feature for node $i$; $\sigma(\cdot)$ is a nonlinear function (it is implemented as ReLU throughout this paper); and $\mW_{l}\in\mathbb{R}^{C_l \times C_{l+1}}$ is the filter matrix in the $l$-th layer. For the analyses in \textsection~\ref{sec:our-methods}, we set the dimensions of all layers to be the same $C_l=C$ for simplicity. We henceforth call generic GCN as GCN for short unless otherwise specified. 

\textbf{GCN with bias (GCN-b).}
In most literature, GCN is introduced in the form of Eq.~\ref{Eq:gcn} without the explicit involvement of the bias term that, however, is necessary in practical implementation. If adding the bias, Eq.~\ref{Eq:gcn} is renewed as
\begin{eqnarray}
\label{Eq:gcn-b}
\mH_{l+1} &=& \sigma\left(\hat{\mA}\mH_{l}\mW_{l}+\vb_{l}\right),
\end{eqnarray}
where the bias is defined by $\vb_l\in\R^{1\times C}$.

\textbf{ResGCN.}
By borrowing the concept from ResNet~\cite{he2016deep}, Kipf \& Welling~\cite{Kipf2017} utilize residual connections between hidden layers to facilitate the training of deeper models by carrying over information from the previous layer’s input: 
\begin{eqnarray}
\label{Eq:resgcn}
\mH_{l+1} &=& \sigma\left(\hat{\mA}\mH_{l}\mW_{l}\right)+\alpha\mH_l,
\end{eqnarray}
where we have further added the weight $0\leq\alpha\leq1$ for more flexibility to balance between the GCN propagation and residual information. 

\textbf{APPNP.}
Since deep GCNs will isolate the output from the input due to over-smoothing, Klicpera et al.~\cite{Klicpera2019} suggest to explicitly conduct skip connections from the input layer to each hidden layer to preserve input information:
\begin{eqnarray}
\label{Eq:appnp}
\mH_{l+1} &=& (1-\beta)\hat{\mA}\mH_{l}+\beta\mH_0,
\end{eqnarray}
where $0\leq\beta\leq1$ is the trade-off weight.
Note that the original version by~\cite{Klicpera2019} dose not involve the non-linearity and weight matrix in each hidden layer. The work by GCNII~\cite{chensimple} seeks for more capacity by adding the ReLU function and trainable weights to the propagation. Here we adopt the original version and find it works promisingly.


\begin{figure}[t!]
\centering
\includegraphics [width=.45\textwidth]{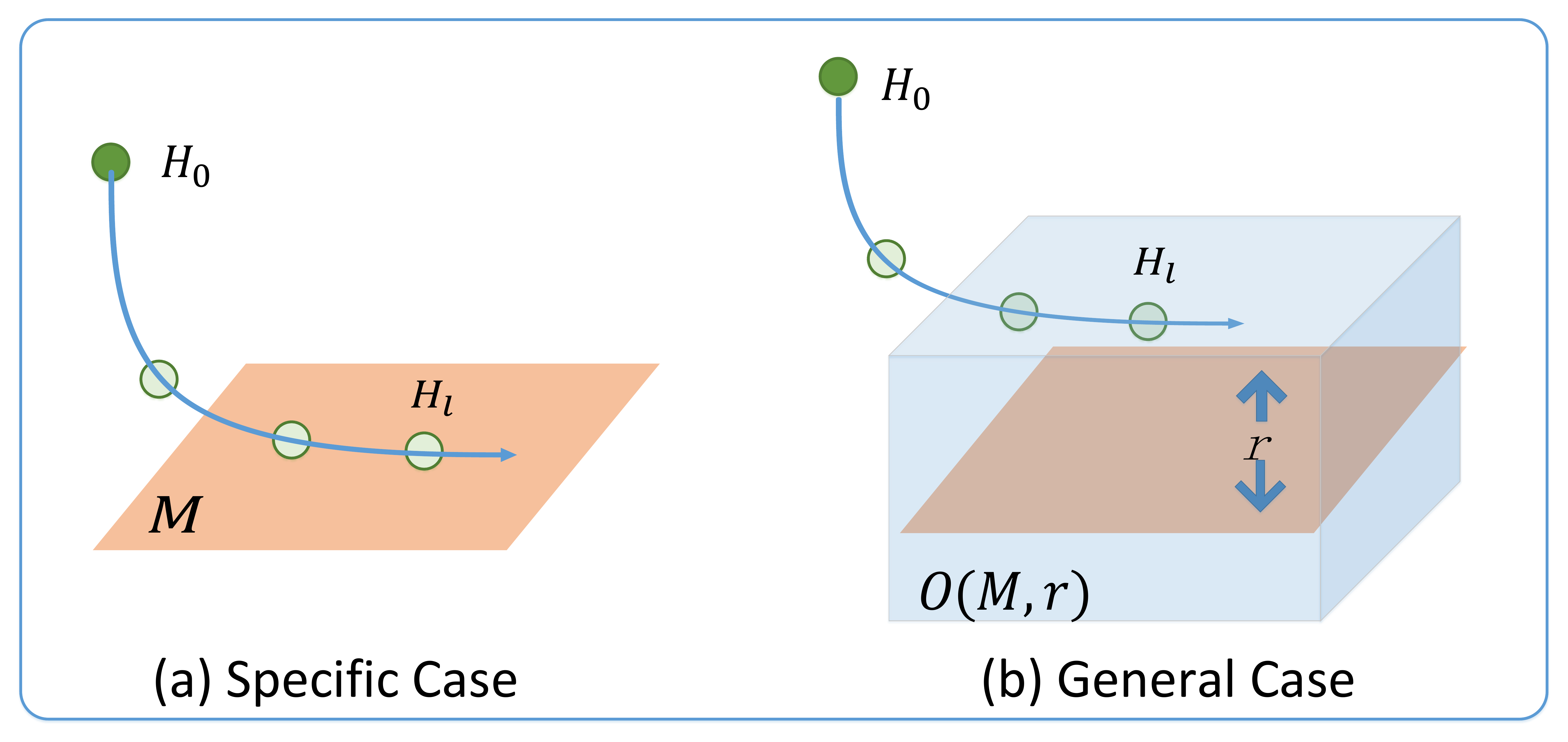}
\caption{(a) Over-smoothing of generic GCN~\cite{oono2019asymptotic}: converging to a multi-dimensional subspace $\gM$; (b) Over-smoothing of general models: converging to a cuboid $O(\gM,r)$. }
\label{fig.illustration}
\end{figure}

\section{Analyses and Methodologies}
\label{sec:our-methods}
In this section, we first derive the universal theorem (\autoref{th:universal}) to explain why and when over-smoothing will happen for all the four models introduced in \textsection~\ref{sec:gcns}. We then introduce DropEdge that is proved to relieve over-smoothing for all models. We also contend that our DropEdge is able to prevent over-fitting, and involve the discussions and extensions of DropEdge with other related notions.

\subsection{Over-smoothing of General Models}
\label{sec:analysis}

The notion of ``over-smoothing'' is originally introduced by ~\cite{Li2018}, later explained by~\cite{oono2019asymptotic} and many other recent works~\cite{rong2019dropedge,chen2020measuring,zhao2019pairnorm,chensimple}. 
{
In general, the over-smoothing phenomenon implies that the node representations become mixed and in-distinguishable with each other after many layers of message passing. It hence weakens the trainability and expressivity of deep GCNs. Notice that over-smoothing in our context is related to ``frequency smoothing" in classical signal processing, if frequency is understood as the eigenvalue of the adjacency/Laplacian matrix, joining the definition by~\cite{ortega2018graph}. Over-smoothing can be explained as graph frequency filtering of small eigenvalues. However, compared to the conventional studies in frequency smoothing~\cite{ortega2018graph}, the analysis of over-smoothing in GCN models is more challenging given the complicated architecture and the involvement of non-linearity, which motivates the theoretical studies in this paper. }

In our following analyses, we exploit the conclusion by~\cite{oono2019asymptotic} for its generality of taking both the non-linearity (\emph{i.e.} the ReLU function) and the convolution filters into account. The authors in~\cite{oono2019asymptotic} explain over-smoothing of deep GCN as convergence to a multi-dimensional subspace. Falling into this subspace will encounter information loss: the nodes within the same connected component are distinguishable only by their degrees. In other words, if two nodes share the same degree in the same component, their representations will be the same after infinite-layer propagation, even their initial features and local topology are clearly different. Such information loss will become more serious if the number of connected components, \emph{i.e.} the dimensionality of the subspace (defined below) is small. We can hence leverage the distance between each GCN layer and the subspace to measure how serious the over-smoothing is. 

The definition of the subspace is given below.
\begin{definition}[Subspace]
\label{de:subspace}
We define $\gM\coloneqq\{\mH\in\R^{N\times C}|\mH=\hat{\mE}\mC, \mC\in\R^{M\times C}\}$ as an $M$-dimensional ($M\le N$) subspace in $\R^{N\times C}$, where $\hat{\mE}=\{\hat{\ve}_1,\cdots,\hat{\ve}_M\}\in\R^{N\times M}$ collects the bases of the largest eigenvalue of $\hat{\mA}$ in \autoref{th:spectral}, namely, $\hat{\ve}_m=\hat{\mD}^{1/2}\vu_m$. 
\end{definition}

{
The subspace $\gM$ is expanded by the columns of $\hat{\mE}$. Definition~\ref{de:subspace} follows the conventional definition of a subspace in~\cite{vetterli2014foundations}, namely, $\gM$ is closed under addition and scalar multiplication. To be specific, if $\mH_1,\mH_2\in\gM$, for any $a, b\in\R$, $a\mH_1 + b\mH_2 = a\hat{\mE}\mC_1 + b\hat{\mE}\mC_2 = \hat{\mE}(a\mC_1+b\mC_2)=\hat{\mE}\mC\in\gM$, since $\mC\coloneqq a\mC_1+b\mC_2\in\R^{M\times C}$.
} We define the distance between matrix $\mH\in\R^{N\times M}$ and $\gM$ as $d_\mathcal{M}(\mH)\coloneqq\inf_{\mY\in \mathcal{M}} ||\mH-\mY||_\mathrm{F}$, where $\|\cdot\|_F$ computes the Frobenius norm.
However, the conclusion by~\cite{oono2019asymptotic} is only applicable for generic GCN. In this section, we derive a universal theorem to characterize the behavior of general GCNs, showing that they will converge to a cuboid other than a subspace with the increase of depth. We first define the cuboid below.

\begin{definition}[Cuboid]
\label{de:cuboid}
We define $\gO(\gM,r)$ as the cuboid that expands $\gM$ up to a radius $r\geq 0$, namely, $\gO(\gM,r)\coloneqq\{\mH\in\R^{N\times C}|d_{\gM}(\mH)\leq r\}$.
\end{definition}

We now devise the general theorem on over-smoothing.
\begin{theorem}[General Over-Smoothing Theorem]
\label{th:universal}
For the GCN models defined in Eq.~\ref{Eq:gcn} to Eq.~\ref{Eq:appnp}, we universally have
\begin{equation}
    d_\mathcal{M}(\mH_{l+1})-r \le v\left(d_\mathcal{M}(\mH_{l})-r\right), \label{equ:distance-b}
\end{equation}
where $v\geq0$ and $r$ describe the convergence factor and radius, respectively, depending on what the specific model is. In particular,
\begin{itemize}
    \item For generic GCN (Eq.~\ref{Eq:gcn}), $v=s\lambda$, $r=0$;
    \item For GCN-b (Eq.~\ref{Eq:gcn-b})\footnote{We assume the distance $d_\mathcal{M}(\vb_l)$ keeps the same for all layers for simplicity; otherwise, we can define it as the supremum.}, $v=s\lambda$, $r=\frac{d_\mathcal{M}(\vb_l)}{1-v}$;
    \item For ResGCN (Eq.~\ref{Eq:resgcn}), $v=s\lambda+\alpha$, $r=0$;
    \item For APPNP (Eq.~\ref{Eq:appnp}), $v=(1-\beta)\lambda$, $r=\frac{\beta d_\mathcal{M}(\mH_0)}{1-v}$,
\end{itemize}
where, $s>0$ is the supremum of all singular values of all $\mW_l$, and $\lambda\coloneqq \max_{n=1}^{N-M}|\lambda_n|<1$ is  
the second largest eigenvalue of $\hat{\mA}$. The equality in Eq.~\ref{equ:distance-b} is achievable under certain specification.
\end{theorem}

{
	The main characteristic of Theorem 2, in contrast to the power method~\cite{van1996matrix} and random-walk-based methods~\cite{Li2018}, is that it has taken the non-linear ReLU function $\sigma$ into account. By involving the non-linearity, it is indeed challenging to analyze the convergence behavior, which resorts to certain tricky transformations and inequations, as demonstrated by Eq.~17-20 in Appendix. In addition, Theorem 2 is applicable for general models, making it more powerful than the methods~\cite{oono2019asymptotic} that are proposed for certain specific case.}

The proof is provided in Appendix A. By Eq.~\ref{equ:distance-b}, we recursively derive $d_{\gM}(\mH_{l})-r\leq v (d_{\gM}(\mH_{l-1})-r)\leq\cdots\leq v^l(d_{\gM}(\mH_{0})-r)$.
We assume $v<1$ for any $v\in\{s\lambda, s\lambda+\alpha, (1-\beta)\lambda\}$ in \autoref{th:universal} by observing that $\lambda<1$, $s\leq1$ (which is usually the case due to the $\ell_2$ penalty during training) and $\alpha$ can be set to small enough\footnote{Otherwise, if $v\geq1$, it will potentially cause gradient explosion and unstable training for deep GCNs, which is not the focus of this paper.}. Under this assumption, \autoref{th:universal} states that the general GCN models actually converge towards the cuboid $O(\gM,r)$, as depicted in Fig.~\ref{fig.illustration}.
We further analyze the convergence behavior for each particular model with the following remarks.

\begin{remark}
\label{re:gcn}
For generic GCN without bias, the radius becomes $r=0$, and we have  $\lim_{l\rightarrow\infty}d_{\gM}(\mH_{l+1})\leq\lim_{l\rightarrow\infty}v^l d_{\gM}(\mH_{0})=0$, indicating that $\mH_{l+1}$ exponentially converges to $\gM$ and thus results in over-smoothing, as already studied by~\cite{oono2019asymptotic}.
\end{remark}

\begin{remark}
\label{re:gcn-b}
For GCN-b, the radius is not zero: $r>0$, and we have 
$\lim_{l\rightarrow\infty}d_{\gM}(\mH_{l+1})\leq \lim_{l\rightarrow\infty}r+v^l( d_{\gM}(\mH_{0})-r)=r$, \emph{i.e.}, $\mH_{l+1}$ exponentially converges to the cuboid $O(\gM,r)$. Unlike $\gM$, $O(\gM,r)$ shares the same dimensionality with $\R^{N\times C}$ and probably contains useful information (other than node degree) for node representation. 
\end{remark}

\begin{remark}
\label{re:resgcn}
For ResGCN, it finally converges to $\gM$ similar to generic GCN. Yet, as $v=s\lambda+\alpha \geq s\lambda$, it exhibits slower convergence speed to $\gM$ compared to generic GCN, which is consistent with our intuitive understanding of the benefit by adding residual connections.
\end{remark}

\begin{remark}
\label{re:appnp}
For APPNP, it converges to $O(\gM,r)$ other than $\gM$ with $r>0$. This explains why adding the input layer to each hidden layer in APPNP helps impede over-smoothing. Notice that increasing $\beta$ will enlarge $r$ but decrease $v$ at the same time, thus leading to faster convergence to a larger cuboid.
\end{remark}


The discussions above in Remarks~\ref{re:gcn}-\ref{re:appnp} show that the value of $\lambda$ plays an important role in influencing over-smoothing for different models, larger $\lambda$ implying less over-smoothing. In the next subsection, we will introduce that our proposed method DropEdge is capable of increasing $\lambda$ and preventing over-smoothing thereby. 





\subsection{DropEdge to Alleviate Over-Smoothing}
\label{sec:methodology}
At each training epoch, the DropEdge technique drops out a certain rate of edges of the input graph by random. Formally, it randomly enforces $Vp$ non-zero elements of the adjacency matrix $\mathbf{A}$ to be zeros, where $V$ is the total number of edges and $p$ is the dropping rate. If we denote the resulting adjacency matrix as $\mA_{\text{drop}}$, then its relation with $\mA$ becomes
\begin{eqnarray}
\label{Eq:DropEdge}
\mA_{\text{drop}} &=& \text{Unif}(\mA, 1-p),
\end{eqnarray}
where $\text{Unif}(\mA, 1-p)$ uniformly samples each edge in $\mA$ with property $1-p$, namely, $\mA_{\text{drop}}(i,j)=\mA(i,j)*\text{Bernoulli}(1-p)$. In our implementation, to avoid redundant sampling edge, we create $\mA_{\text{drop}}$ by drawing a subset of edges of size $V(1-p)$ from $\mA$ in a non-replacement manner.   Following the idea of~\cite{Kipf2017}, we also perform the re-normalization trick on $\mA_{\text{drop}}$ to attain $\hat{\mA}_{\text{drop}}$. We replace $\hat{\mA}$ with $\hat{\mA}_{\text{drop}}$ in Eq.~\ref{Eq:gcn} for propagation and training. When validation and testing, DropEdge is not utilized.

\autoref{th:universal} tells that the degenerated expressivity of deep GCNs is closely related to $v$ and thereby $\lambda$---the absolute second-largest eigenvalue of $\hat{\mA}$. Here, we will demonstrate that adopting DropEdge decreases $\lambda$ and alleviates over-smoothing. In our previous conference version~\cite{rong2019dropedge}, we only discuss how DropEdge influences the spectral of $\mA_{\text{drop}}$ without considering the re-normalization trick. In this paper, we will draw the conclusion directly for the normalized augmented adjacency matrix $\hat{\mA}_{\text{drop}}$ below.
\begin{theorem}
\label{The:smoothing}
We denote $\lambda(p)$ as any absolute eigenvalue of the expected $\mA_{\text{drop}}$ under dropping rate $p$ after re-normalization. We can bound the value of $\lambda(p)$ by
\begin{eqnarray}
\label{eq:lambda_drop}
\mu(p)\leq\lambda(p)\leq\gamma(p),
\end{eqnarray}
where both $\mu(p)$ and $\gamma(p)$ are monotonically increasing functions with regard to $p$. Besides, the gap $\gamma(p)-\mu(p)$ monotonically decreases with respect to $p$, and when $p=1$, the gap reduces to zero, leading to $\mu(1)=\lambda(1)=\gamma(1)=1$.
\end{theorem}


We provide the full details in Appendix B.
Theorem~\ref{The:smoothing} tells that
performing DropEdge is able to increase both the upper and lower bounds of $\lambda$ (and decrease their gap at the same time), which will enforce $\lambda$ towards a larger value particularly when
$p$ is close to 1 (with sufficient edges dropped). 
This, to a certain degree, can slow down the over-smoothing speed in Eq.~\ref{equ:distance-b}. Fig.~\ref{fig.dropedge-os} illustrates the relation between $\lambda$ and $p$, where $\lambda$ may fluctuates initially but its value is finally increased when $p$ is enlarged. 

DropEdge is also able to increase the dimensionality of the subspace, thus alleviating information loss, as already proved by our conference paper~\cite{rong2019dropedge}. We summarize this property as a theorem as follows. 
\begin{theorem}
\label{The:smoothing-2}
Regarding the GCN models in Eq.~\ref{Eq:gcn} to Eq.~\ref{Eq:appnp}, we assume by $\gM$ the convergence subspace defined in Definition~\ref{de:subspace} on the original graph, and by $\gM'$ on the one after DropEdge. Then, after certain edges removed, the information loss is only decreased: $N-\text{dim}(\gM) \geq N-\text{dim}(\gM')$\footnote{In a general sense, the dimensionality of data space does not necessarily reflect the amount of information, but in this paper converging to the subspace of smaller dimension does indicates more serious information loss considering the particular structure of the subspace given by Definition~\ref{de:subspace}.}.
\end{theorem}



\begin{figure}[t!]
\centering
\includegraphics [width=.35\textwidth]{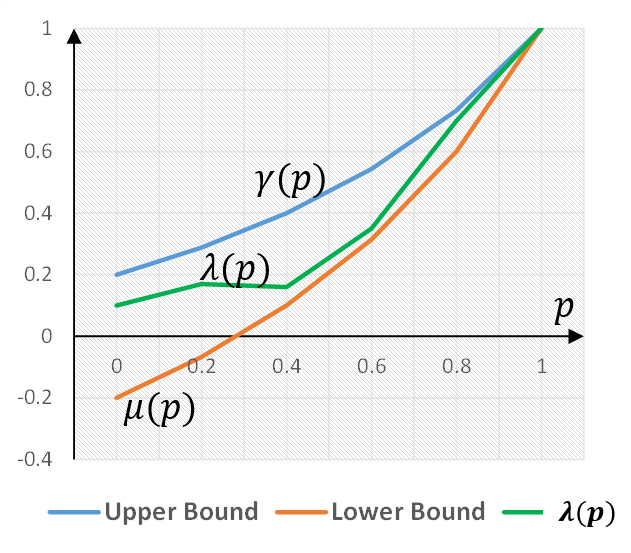}
\caption{Illustration of Eq.~\ref{eq:lambda_drop} where $\lambda$ is bounded by $\mu(p)$ and $\gamma(p)$. The derivations of $\mu(p)$ and $\gamma(p)$ are given in Appendix~B.}
\label{fig.dropedge-os}
\end{figure}

\autoref{The:smoothing}-\ref{The:smoothing-2} do suggest that DropEdge is able to alleviate over-smoothing, but they do \textbf{not} mean preventing over-smoothing by DropEdge will always deliver enhanced classification performance. For example, dropping all edges will address over-smoothing completely, which yet will weaken the model expressive power as well since the the GCN model has degenerated to an MLP without considering topology modeling. In general, how to balance between preventing over-smoothing and expressing graph topology is critical, and one should take care of choosing an appropriate edge dropping rate $p$ to reflect this. In our experiments, we select the value of $p$ by using validation data, and find it works well in a general way.
\begin{table*}[t!]
  \centering
  \caption{Dataset Statistics}
  \small
\begin{tabular}{lrrrrcl}
\hline
Datasets & \multicolumn{1}{l}{Nodes} & \multicolumn{1}{l}{Edges} & \multicolumn{1}{l}{Classes} & \multicolumn{1}{l}{Features} & \multicolumn{1}{l}{Traing/Validation/Testing} & Type \\
\hline
Cora  & 2,708 & 5,429 & 7     & 1,433 & 1,208/500/1,000 & Transductive \\
Citeseer & 3,327 & 4,732 & 6     & 3,703 & 1,812/500/1,000 & Transductive \\
Pubmed & 19,717 & 44,338 & 3     & 500   & 18,217/500/1,000 & Transductive \\
Reddit & 232,965 & 11,606,919 & 41    & 602   & 152,410/23,699/55,334 & Inductive \\
\hline
\end{tabular}%
  \label{table:data}%
\end{table*}%

{
We would like to highlight that DropEdge acts \textbf{in a random yet unbiased way}. It does drop a certain number of edges at each specific training iteration for relieving over-smoothing, but different edges are removed at different iteration in accordance with the uniform probability. In expectation, the information of the graph edges (and the underlying kernel) are still retained from the perspective of the whole training process. Briefly, the hidden feature of node $i$ in the $(l+1)$-th layer is aggregated from all its neighborhoods in the $l$-th layer, which means the full aggregation is defined as $\vh_{i, l+1}=\sum_{j\in\gN(i)}\mA(i,j)\vh_{j,l}$. In DropEdge, each edge is drawn from a Bernoulli distribution denoted as $\text{Bernoulli}(1-p)$, where $p$ is the dropping rate. Then, the expectation of the aggregation with regard to all edges is given by $\E[\vh_{i, l+1}|\mH_l]=\E[\sum_{j\in\gN(i)}\mA_{i,j}\vh_{j,l}|\mH_l]=\sum_{j\in\gN(i)}\E[\mA_{i,j}]\vh_{j,l}=(1-p)\sum_{j\in\gN(i)}\mA_{i,j}\vh_{j,l}$, which is the same as the original full aggregation up to a multiplier $1-p$, and this multiplier is erased if the sum-to-1 normalization is conducted on adjacency weights $\mA$. Such unbiased sampling behavior makes our DropEdge distinct from the random graph generation methods~\cite{erdos1960evolution} or sparse graph learning approaches~\cite{friedman2008sparse} where the graph once generated/modified will keep fixed for all training iterations, leading to the exact information loss of edge connections. By the way, dropping edges randomly is able to create different random deformations of the input graph. In this way, DropEdge is able to prevent over-fitting, similar to typical image augmentation skills (\emph{e.g.} rotation, cropping and flapping) to hinder over-fitting in training CNNs. We will provide experimental validations in \textsection~\ref{sec:glasso}.
}


\textbf{Layer-Wise DropEdge.}
The above formulation of DropEdge is one-shot with all layers sharing the same perturbed adjacency matrix. Indeed, we can perform DropEdge for each individual layer. Specifically, we obtain $\hat{\mA}_{\text{drop}}^{(l)}$ by independently computing Eq.~\ref{Eq:DropEdge} for each $l$-th layer. Different layer could have different adjacency matrix $\hat{\mA}_{\text{drop}}^{(l)}$. Such layer-wise version brings in more randomness and deformation of the original data, and we will experimentally compare its performance with the original DropEdge in \textsection~\ref{sec:exp-dropedge}.

\subsection{Discussions}\label{sec.discussions}
This sections contrasts the difference between DropEdge and other related concepts including Dropout, DropNode, and Graph Sparsification. We also discuss the difference of over-smoothing between node classification and graph classification.

\textbf{DropEdge vs. Dropout.}
The Dropout trick~\cite{Hinton2012} is trying to perturb the feature matrix by randomly setting feature dimensions to be zeros, which may reduce the effect of over-fitting but is of no help to preventing over-smoothing since it does not make any change of the adjacency matrix. As a reference, DropEdge can be regarded as a generation of Dropout from dropping feature dimensions to dropping edges, which mitigates both over-fitting and over-smoothing. In fact, the impacts of Dropout and DropEdge are complementary to each other, and their compatibility will be shown in the experiments.

\textbf{DropEdge vs. DropNode.}
Another related vein belongs to the kind of node sampling based methods, including GraphSAGE~\cite{hamilton2017inductive}, FastGCN~\cite{chen2018fastgcn}, and AS-GCN~\cite{Huang2018}. We name this category of approaches as DropNode. For its original motivation, DropNode samples sub-graphs for mini-batch training, and it can also be treated as a specific form of dropping edges since the edges connected to the dropping nodes are also removed. However, the effect of DropNode on dropping edges is node-oriented and indirect. By contrast, DropEdge is edge-oriented, and it is possible to preserve all node features for the training (if they can be fitted into the memory at once), exhibiting more flexibility. Further, to maintain desired performance, the sampling strategies in current DropNode methods are usually inefficient, for example, GraphSAGE suffering from the exponentially-growing layer size (the number of sampled node), and AS-GCN requiring the sampling to be conducted recursively layer by layer. Our DropEdge, however, neither increases the layer size as the depth grows nor demands the recursive progress because the sampling of all edges are parallel.

\textbf{DropEdge vs. Graph-Sparsification.}
Graph-Sparsification~\cite{eppstein1997sparsification} is an old research topic in the graph domain. Its goal is removing unnecessary edges for graph compressing while keeping almost all information of the input graph. This is clearly district from the purpose of DropEdge where no optimization objective is needed. Specifically, DropEdge will remove the edges of the input graph by random at each training time, whereas Graph-Sparsification resorts to a tedious optimization method  to determine which edges to be deleted, and once those edges are discarded the output graph keeps unchanged.

\textbf{Node Classification vs. Graph Classification.}
The main focus of our paper is on node classification, where all nodes are in an identical graph. In graph classification, the nodes are distributed across different graphs; in this scenario, \autoref{th:universal} is still applicable per graph, and the node activations of an infinitely-deep GCN in the same graph instance are only distinguishable by node degrees. Yet, this is not true for those nodes in different graphs, since they will converge to different positions in $\gM$ (\emph{i.e.} $\mC$ in Definition~\ref{de:subspace}). To illustrate this, we suppose all graph instances are fully connected graphs and share the same form of $\hat{\mA}=\{\frac{1}{N}\}_{N\times N}$, the node features $\mX_i$ ($\geq0$) within graph $i$ are the same but different from those in different graphs, and the weight matrix is fixed as $\mW=\mI$ in Eq.~\ref{Eq:gcn}. Then, any layer of generic GCN keeps outputting $\mX_i$ for graph $i$, which indicates no information confusion happens across different graphs. 
Note that for graph classification over-smoothing per graph still hinders the expressive capability of GCN, as it will cause dimension collapse of the input data. 

\section{Experiments}
\label{sec:exps}
Our experimental evaluations are conducted with the goal to answer the following questions:
\begin{itemize}
    \item Is our proposed universal over-smoothing theorem in line with the experimental observation?
    \item How does our DropEdge help in relieving over-smoothing of different GCNs?
\end{itemize}
To address the first question, we display on a simulated dataset how the node activations will behave when the depth grows. We also calculate the distance between the node activations and the subspace to show the convergence dynamics. As for the second question, we contrast the classification performance of varying models of different depths with and without DropEdge on several real node classification benchmarks. The comparisons with state-of-the-art methods are involved as well. 

\begin{figure*}
\centering
\subfloat[Initial Visualization]{
\includegraphics[width=0.24\textwidth]{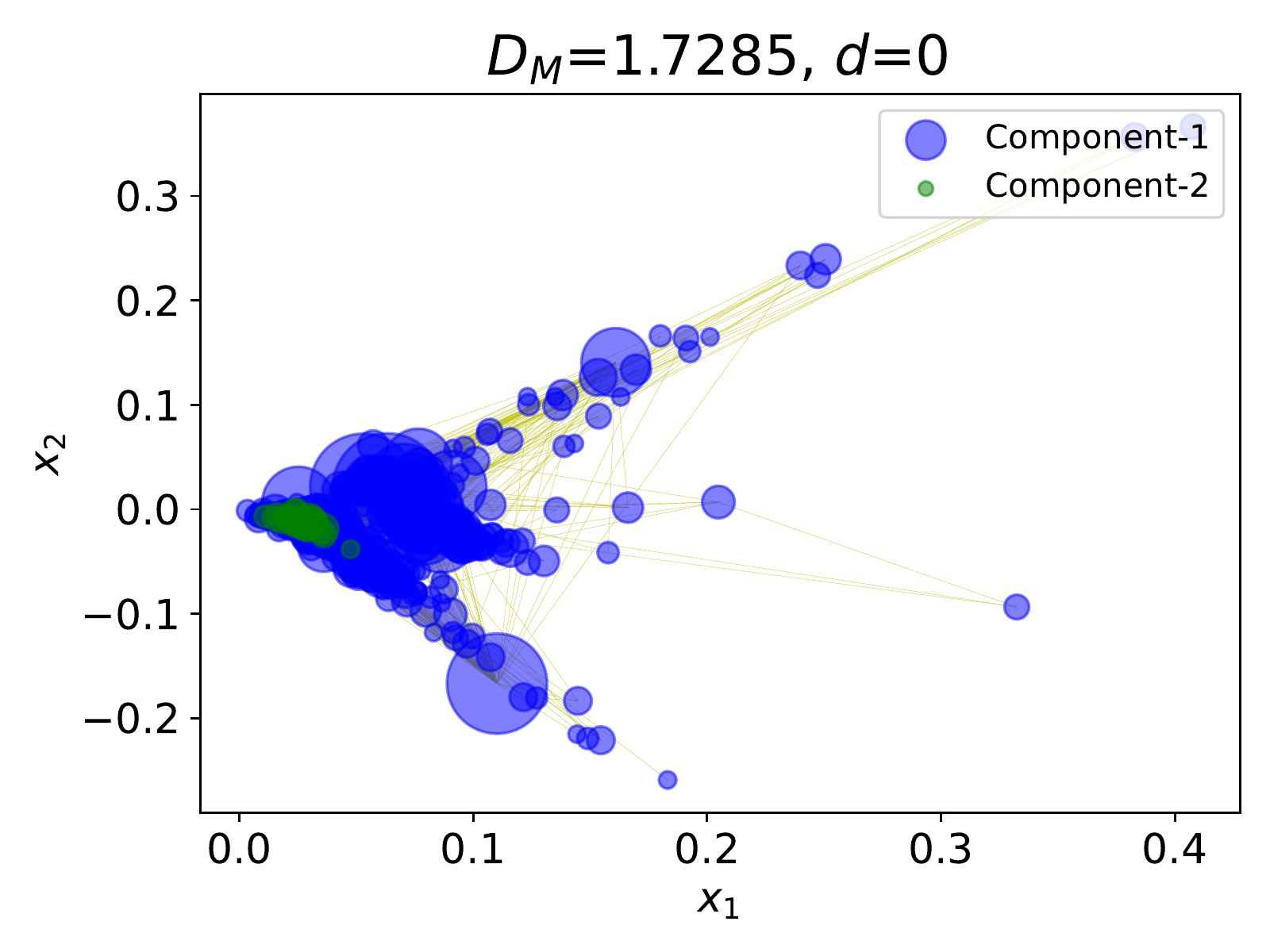}}
\subfloat[GCN-b]{
\includegraphics[width=0.24\textwidth]{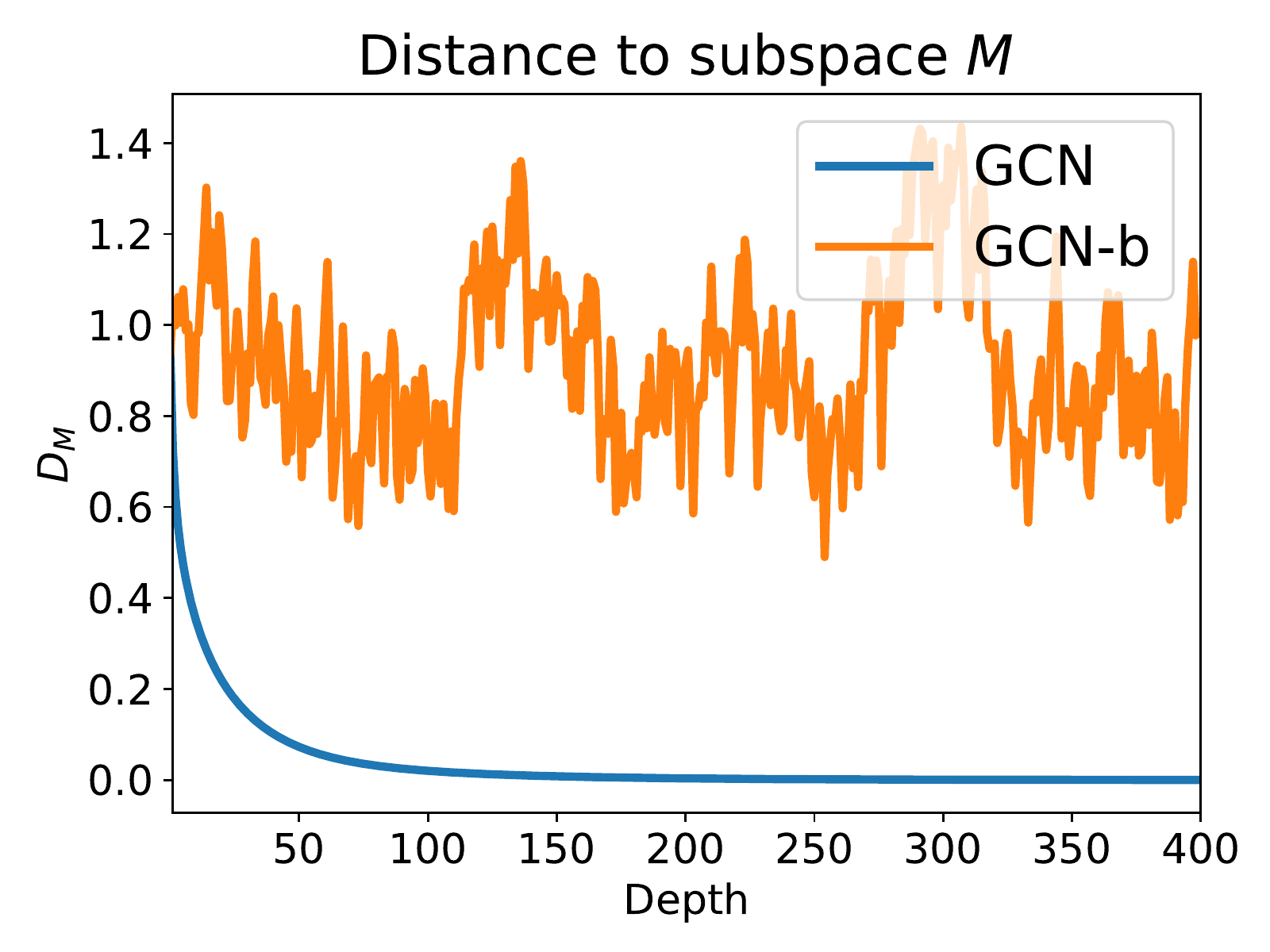}}
\subfloat[ResGCN]{
\includegraphics[width=0.24\textwidth]{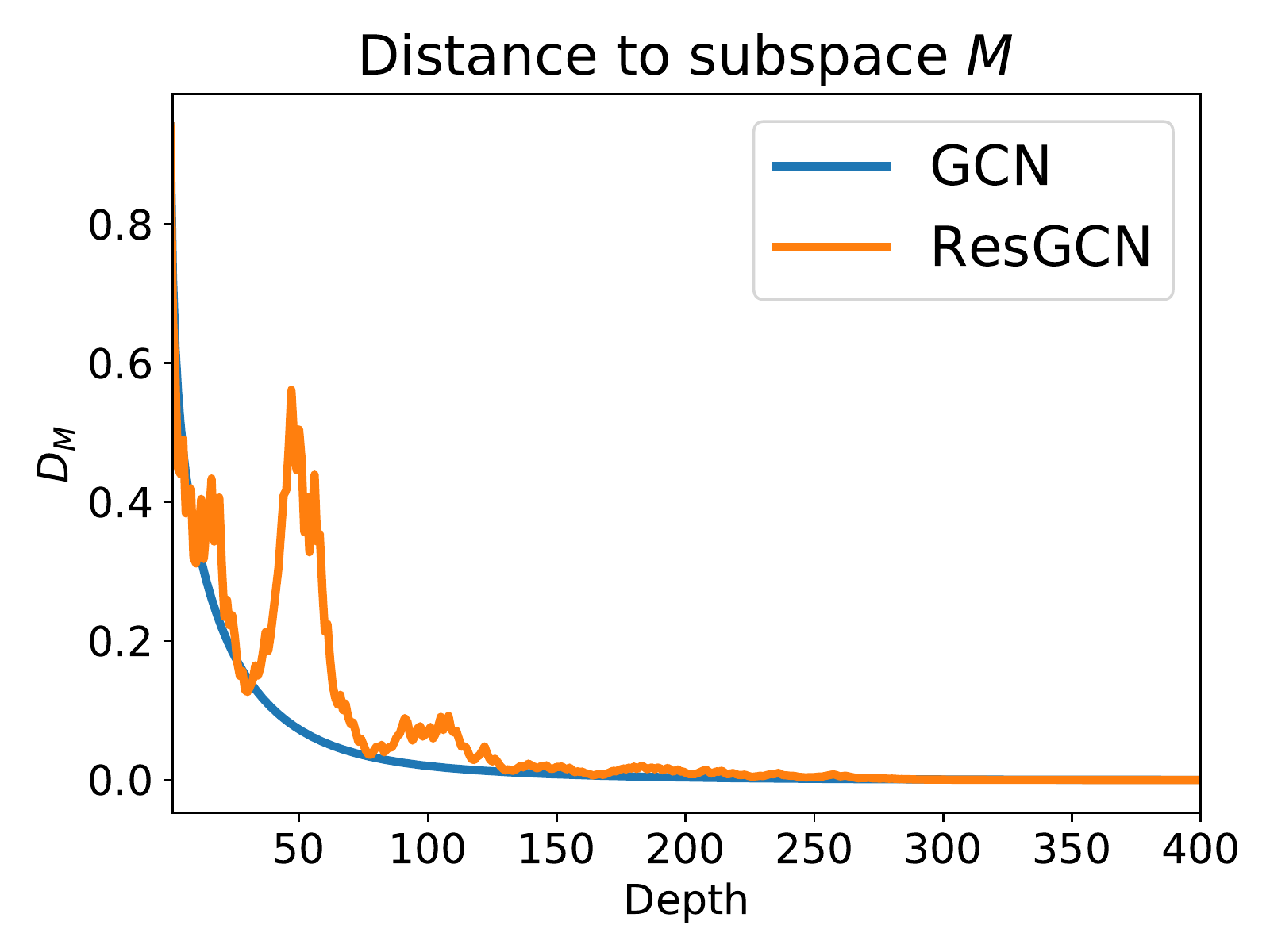}}
\subfloat[APPNP]{
\includegraphics[width=0.24\textwidth]{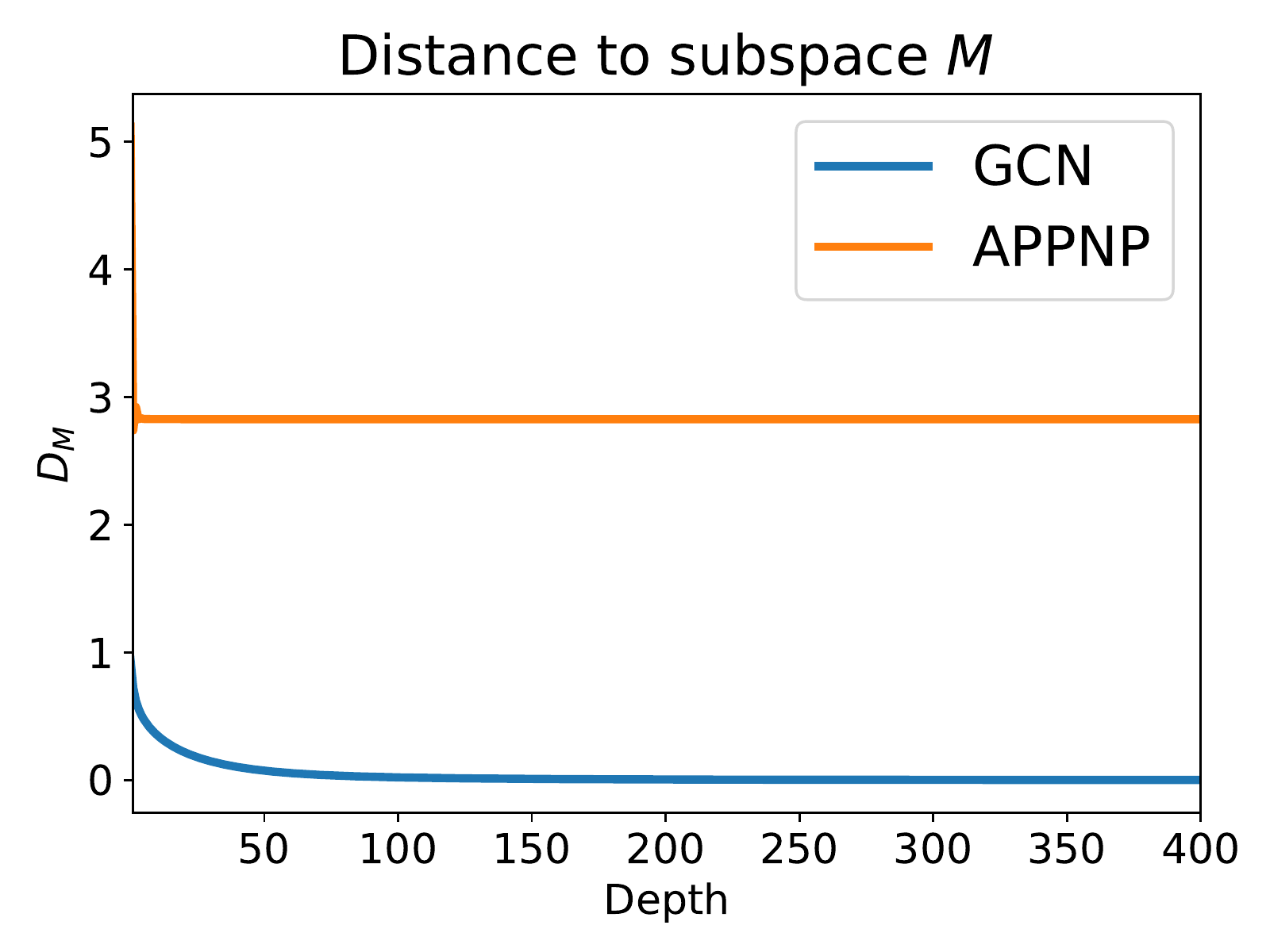}}\\
\subfloat[GCN]{
\includegraphics[width=0.24\textwidth]{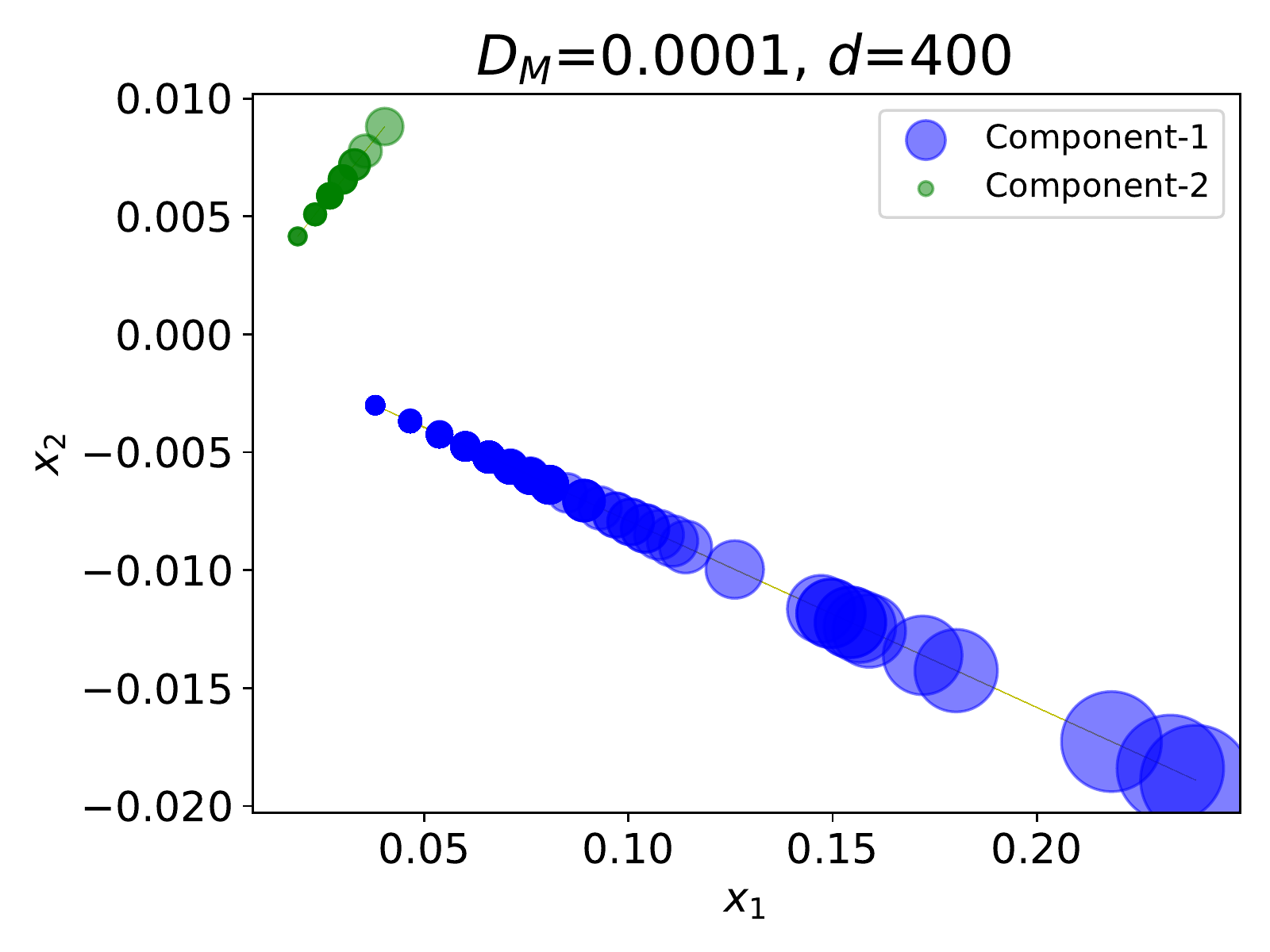}}
\subfloat[GCN-b]{
\includegraphics[width=0.24\textwidth]{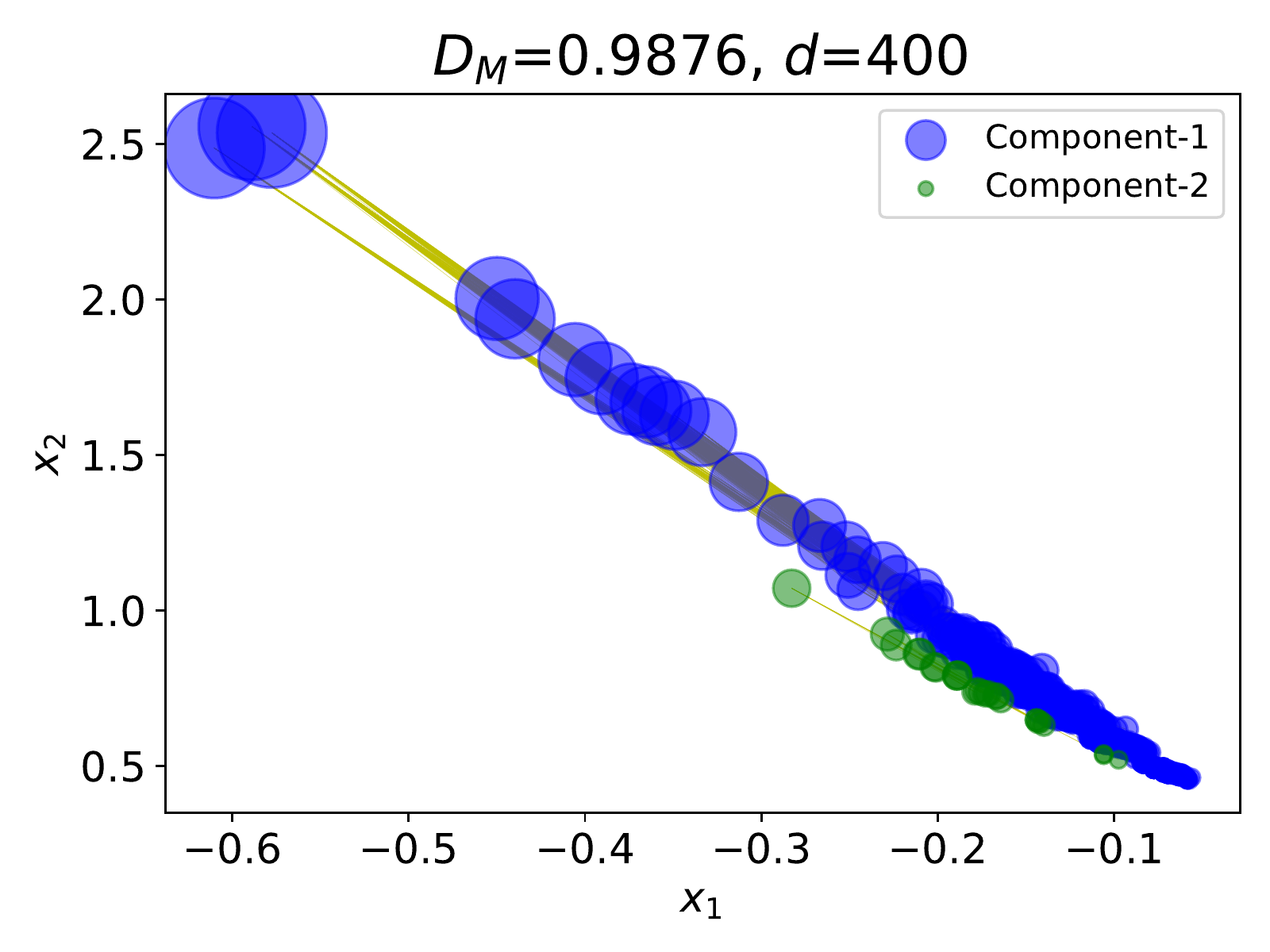}}
\subfloat[ResGCN]{
\includegraphics[width=0.24\textwidth]{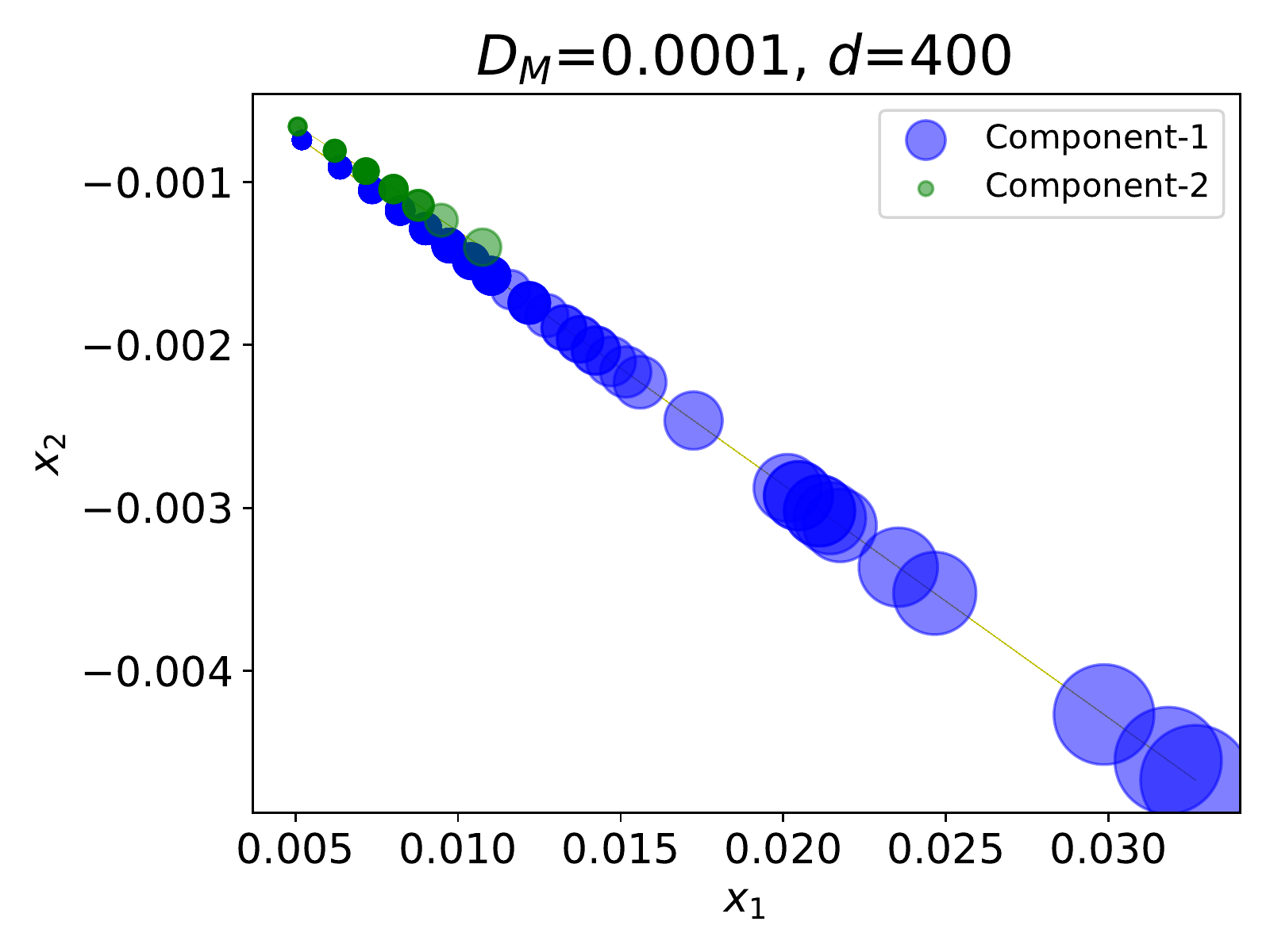}}
\subfloat[APPNP]{
\includegraphics[width=0.24\textwidth]{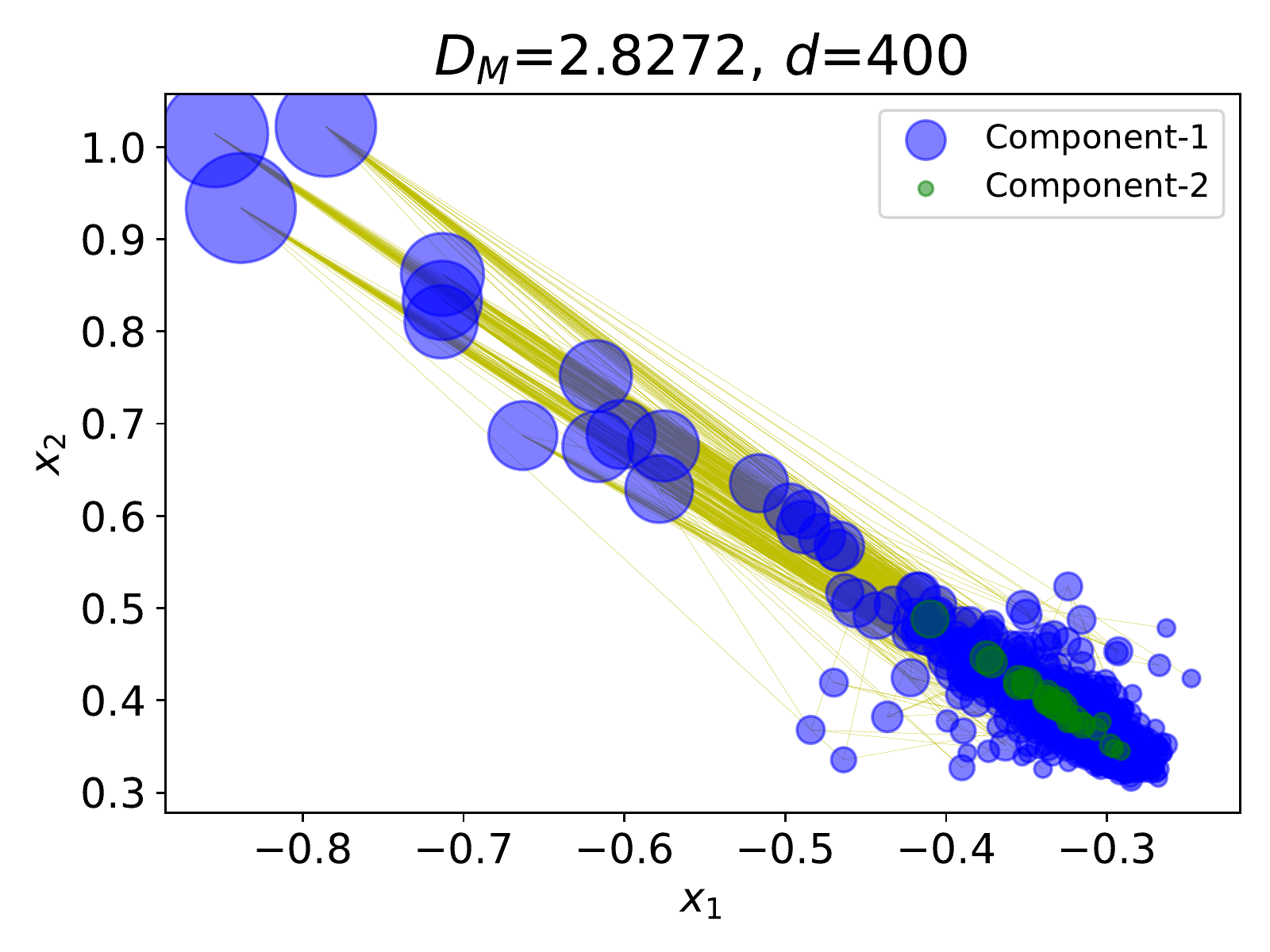}}

\caption{Output dynamics of GCN. (a) The initial visualization of the node distribution on Small-Cora, where the displayed size of each node reflects its degree. (b-d) The comparisons of the distance to the subspace $d_{\gM}$ between GCN and GCN-b, ResGCN and  APPNP, respectively, when the depth $d$ ranges from 0 to 400; (e-h) The visualization of the output for GCN, GCN-b, ResGCN and APPNP, when $d=400$.}
\label{fig.osm-gcn}
\end{figure*}

\begin{figure*}
\centering
\includegraphics[width=0.24\textwidth]{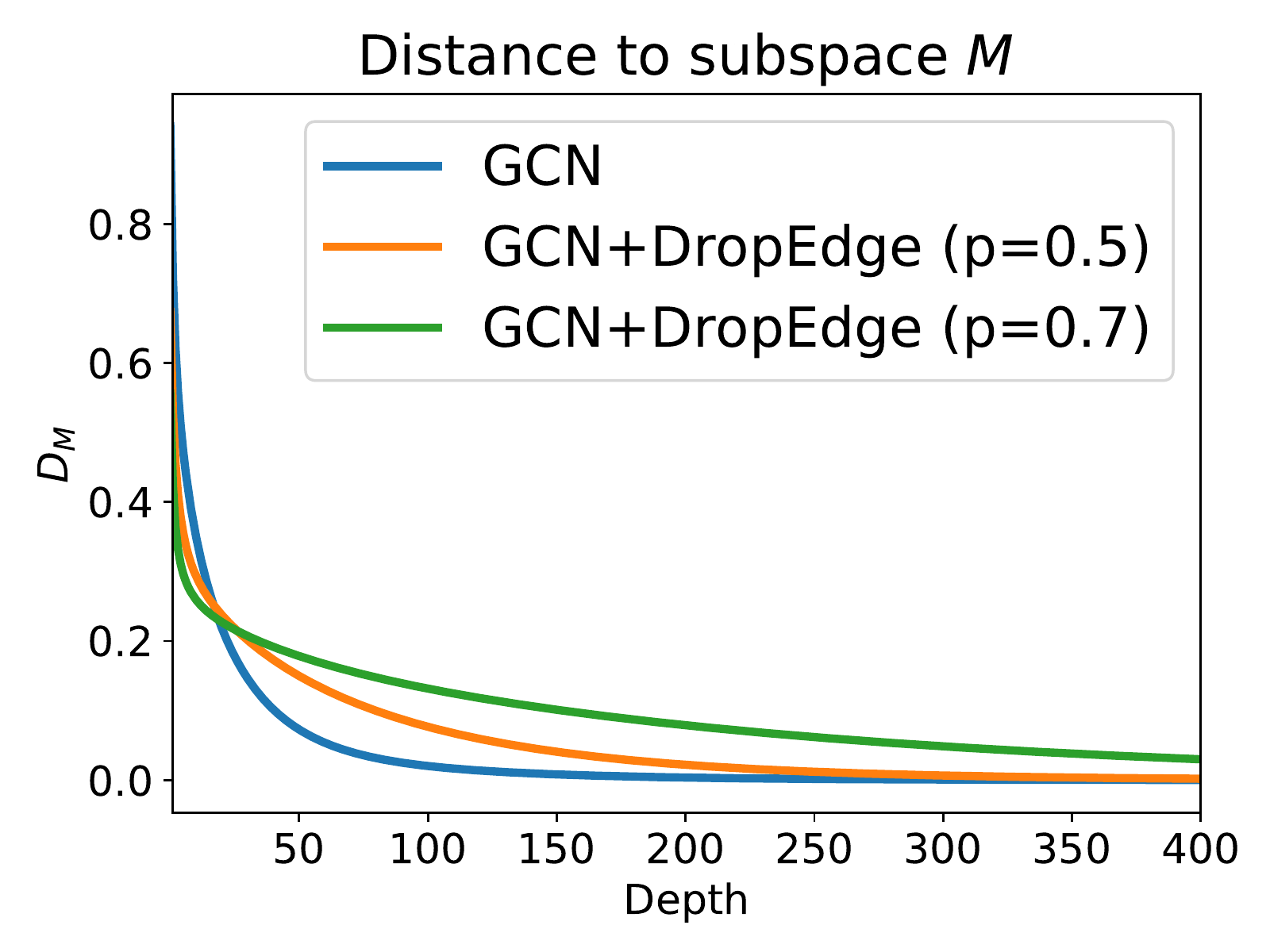}
\includegraphics[width=0.24\textwidth]{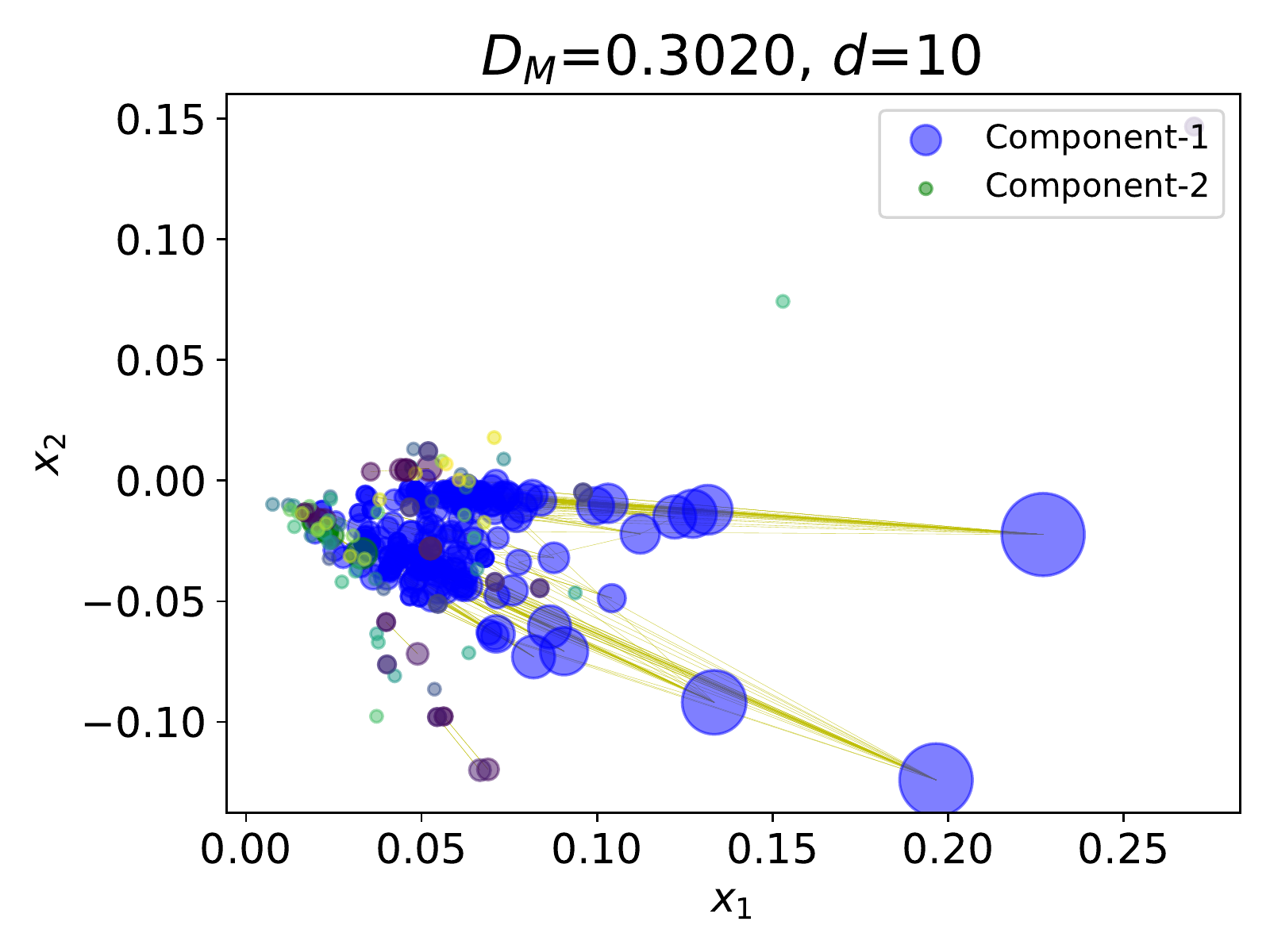}
\includegraphics[width=0.24\textwidth]{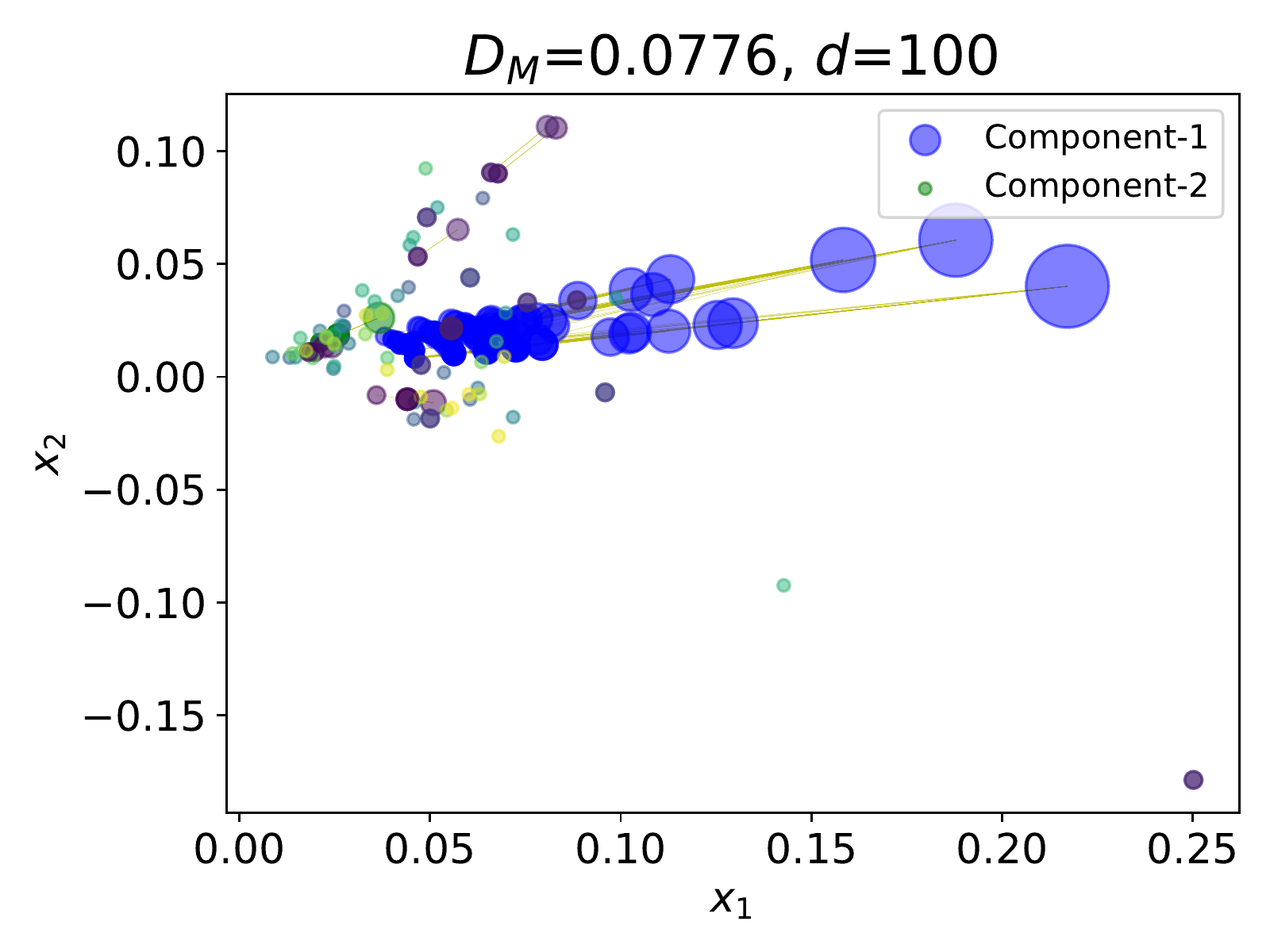}
\includegraphics[width=0.24\textwidth]{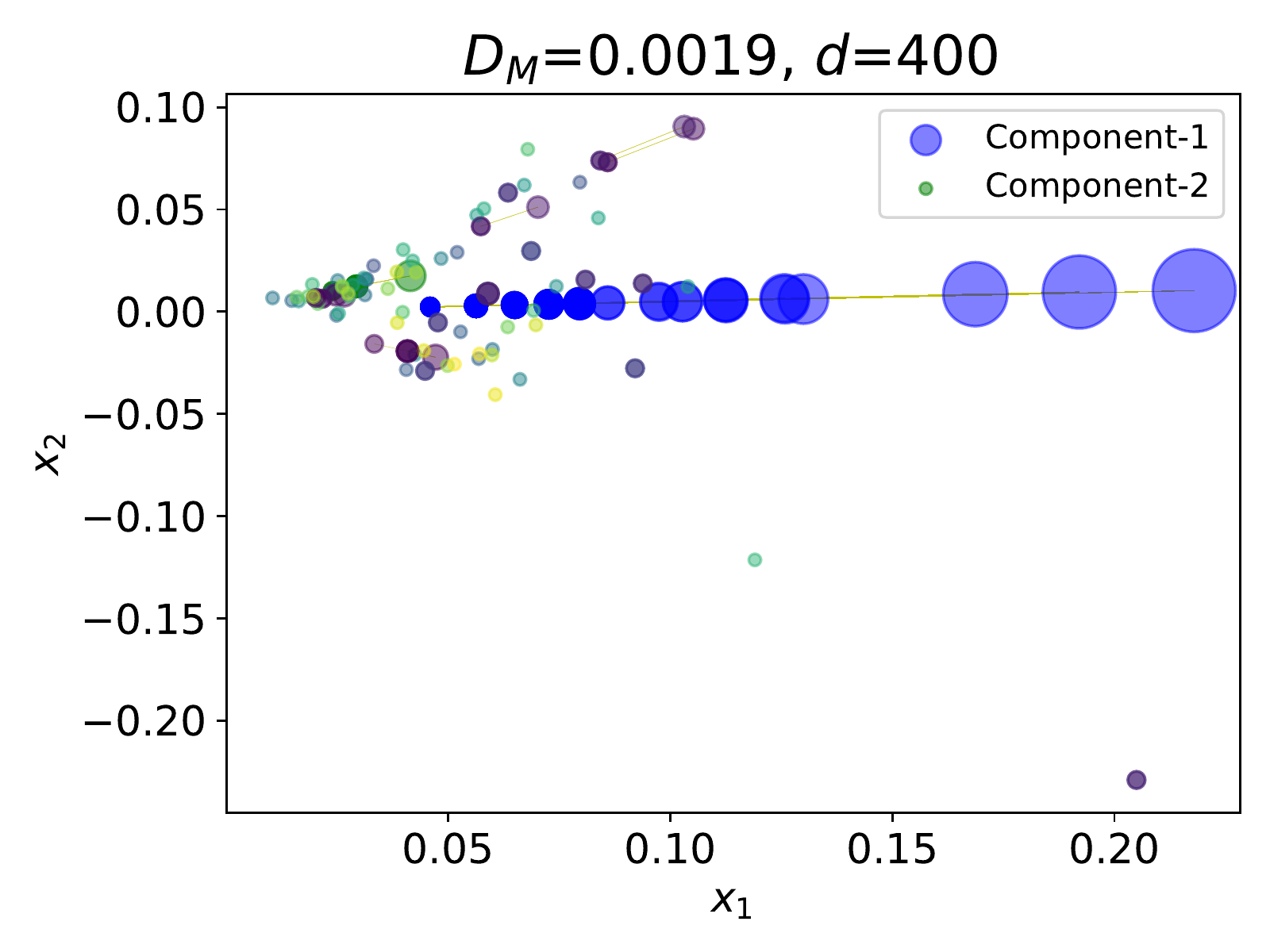}
\caption{Output dynamics of GCN with DropEdge. The left sub-figure plots $d_{\gM}$ for GCN and GCN with DropEdge (dropping rate $p=0.5, 0.7$) under varying depth. Other sub-figures depict the output when the depth is 10, 100, and 400 ($p=0.5$).}
\label{fig.osm-dropedge}
\end{figure*}

\textbf{Node classification datasets.} 
Joining the previous works' practice, we focus on four benchmark datasets varying in graph size and feature type: (1) classifying the research topic of papers in three citation datasets: Cora, Citeseer and Pubmed~\cite{sen2008collective}; (2) predicting which community different posts belong to in the Reddit social network~\cite{hamilton2017inductive}. Note that the tasks in Cora, Citeseer and Pubmed are transductive underlying all node features are accessible during training, while the task in Reddit is inductive meaning the testing nodes are unseen during training. We apply the full-supervised training fashion used in~\cite{Huang2018} and \cite{chen2018fastgcn} on all datasets in our experiments. The statics of all datasets are listed in Tab.~\ref{table:data}.

\textbf{Small-Cora.}
We have constructed a small dataset from Cora. In detail, we sample two connected components from the training graph of Cora, with the numbers of nodes being 654 and 26, respectively. The original feature dimension of nodes is 1433 which is not suitable for visualization on a 2-dimension plane. Hence, we apply truncated SVD for dimensionality reduction with output dimension as 2. Fig.~\ref{fig.osm-gcn} (a) displays the distribution of the node features. We call this simulated dataset as Small-Cora.

\subsection{Visualizing over-smoothing on Small-Cora}
\label{sec:small-cora}
Theorem~\ref{th:universal} has derived the universality of over-smoothing for the four models: GCN, GCN-b, ResGCN, and APPNP. Here, to check if it is consistent with empirical observations, we visualize the dynamics of the node activations on Small-Cora.  

\textbf{Implementations.}
To better focus on how the different structure of different GCN influences over-smoothing, the experiments in this section fix the hidden dimension of all GCNs to be 2, randomly initialize an orthogonal weight matrix $\mW$ for each layer and keep them untrained, which leads to $s=1$ in Eq.~\ref{equ:distance-b}. We also remove ReLU function, as we find that, with ReLU, the node activations will degenerate to zeros when the layer number grows, which will hinder the visualization. Regarding GCN-b, the bias of each layer is set as 0.05. We fix $\alpha=0.2$ and $\beta=0.5$ for ResGCN and APPNP, respectively. Since the total number of nodes is small (\emph{i.e.} 680), we are able to exactly devise the bases of the subspace, \emph{i.e.} $\mE$ according to Theorem~\ref{th:spectral}, and compute the distance between node activations $\mH$ and subspace $\gM$: $d_{\gM}$ using Eq. 11 in Appendix A. 
Fig.~\ref{fig.osm-gcn} demonstrates the output dynamics of all models. We have the following findings.

\textbf{For GCN}.
The nodes are generally distinguishable when $d=0$ (see (a)). After increasing $d$, the distance $d_{\gM}$ decreases dramatically, and it finally reaches very small value when $d=400$ (see (b) for example). It is clearly shown that, when $d=400$, the nodes within different components are collinear onto different lines, and the bigger (of larger degree) the node is, the farther it is from the zero point (see (e)). Such observation is consistent with Remark~\ref{re:gcn}, as different lines indeed represent different bases of the subspace.

\textbf{For GCN-b}.
The output dynamics of GCN-b is distinct from GCN. It turns out the nodes within the same component keep non-collinear when $d=400$, as shown in (f). In (b), in contrast to GCN, the curve of GCN-b fluctuates within a certain bounded area. This result coincides with Remark~\ref{re:gcn-b} and supports that adding the bias term enables the node activations to converge to a cuboid surrounding the subspace under a certain radium.

\textbf{For ResGCN}.
Akin to GCN, the output of ResGCN approaches the subspace in the end, but its convergence dynamics as shown in (c) is a bit different. The curve shakes up and down for several rounds before it eventually degenerates. This could because the skip connection in ResGCN helps prevent over-smoothing or even reverse the convergence direction during the early period. When $d=400$ (in (g)),    
each node will exponentially fall into the subspace at the rate of $\lambda+\alpha$ as proven in Remark~\ref{re:resgcn}. Note that the average speed of ResGCN is smaller than that of GCN (recalling $\lambda+\alpha>\lambda$).

\textbf{For APPNP}.
The behavior of APPNP is completely different from GCN in (d). It quickly becomes stationary and this stationary point is beyond the subspace up to a fixed distance $r>0$, which confirms the conclusion by Remark~\ref{re:appnp}. In APPNP, as the rate $v=\lambda\beta$ is smaller than that of GCN, its convergence speed is faster.  

In addition, Fig.~\ref{fig.osm-dropedge} demonstrates how DropEdge changes the dynamics of GCN.  
Clearly, the results verify Theorem~\ref{The:smoothing}-\ref{The:smoothing-2}, where the convergence to the subspace becomes slower and the number of connected components is larger when we perform DropEdge on GCN with the dropping rate $p=0.5$. If we further increase $p$ to 0.7, the convergence speed will be further decreased.

\begin{table*}[htbp]
  \centering
   \renewcommand\arraystretch{1.1}
  \caption{Testing accuracy (\%) on different backbones. }
  \setlength{\tabcolsep}{9pt}
\begin{tabular}{|c|c|l|rrrrrr|r|}
\toprule
\multicolumn{3}{|c|}{Layer} & 2     & 4     & 8     & 16    & 32    & 64    & \multicolumn{1}{l|}{Average  Improvement}  \\
\midrule
\multirow{10}[20]{*}{Cora} & \multirow{2}[4]{*}{GCN} & Original & 85.8  & 85.6  & 83.2  & 81.2  & 69.8  & 42.1  & -  \\
\cline{3-3}      &       & DropEdge & \textbf{86.4} & \textbf{86.6} & \textbf{85.5} & \textbf{84.2} & \textbf{71.3} & \textbf{50.6} & \textbf{+2.8}  \\
\cline{2-10}      & \multirow{2}[4]{*}{GCN-b} & Original & 86.1  & 85.5  & 78.7  & 82.1  & 71.6  & 52.0  & -  \\
\cline{3-3}      &       & DropEdge & \textbf{86.5} & \textbf{87.6} & \textbf{85.8} & \textbf{84.3} & \textbf{74.6} & \textbf{53.2} & \textbf{+2.7}  \\
\cline{2-10}      & \multirow{2}[4]{*}{ResGCN} & Original & -     & 86.0  & 85.4  & 85.3  & 85.1  & 79.8  & -  \\
\cline{3-3}      &       & DropEdge & \textbf{-} & \textbf{87.0} & \textbf{86.9} & \textbf{86.9} & \textbf{86.8} & \textbf{84.8} & \textbf{+2.2}  \\
\cline{2-10}      & \multirow{2}[4]{*}{JKNet} & Original & -     & 86.9  & 86.7  & 86.2  & 87.1  & 86.3  & \textbf{-}  \\
\cline{3-3}      &       & DropEdge & \textbf{-} & \textbf{87.7} & \textbf{87.8} & \textbf{88.0} & \textbf{87.6} & \textbf{87.9} & \textbf{+1.2}  \\
\cline{2-10}      & \multirow{2}[4]{*}{APPNP} & Original & -     & 87.9  & 87.7  & 87.5  & 87.8  & 87.5  & -  \\
\cline{3-3}      &       & DropEdge & \textbf{-} & \textbf{88.6} & \textbf{89.0} & \textbf{88.8} & \textbf{88.9} & \textbf{89.1} & \textbf{+1.2} \\
\hline
\multirow{10}[20]{*}{Citeseer} & \multirow{2}[4]{*}{GCN} & Original & 76.8  & 72.7  & 72.6  & 72.2  & 56.5  & 43.8  & - \\
\cline{3-3}      &       & DropEdge & \textbf{78.1} & \textbf{79.0} & \textbf{75.4} & \textbf{67.5} & \textbf{60.5} & \textbf{46.4} & \textbf{+2.1} \\
\cline{2-10}      & \multirow{2}[4]{*}{GCN-b} & Original & 75.9  & 76.7  & 74.6  & 65.2  & 59.2  & 44.6  & - \\
\cline{3-3}      &       & DropEdge & \textbf{78.7} & \textbf{79.2} & \textbf{77.2} & \textbf{76.8} & \textbf{61.4} & \textbf{45.6} & \textbf{+3.8} \\
\cline{2-10}      & \multirow{2}[4]{*}{ResGCN} & Original & -     & 78.9  & 77.8  & 78.2  & 74.4  & 21.2  & - \\
\cline{3-3}      &       & DropEdge & \textbf{-} & \textbf{78.8} & \textbf{78.8} & \textbf{79.4} & \textbf{77.9} & \textbf{75.3} & \textbf{+11.9} \\
\cline{2-10}      & \multirow{2}[4]{*}{JKNet} & Original & -     & 79.1  & 79.2  & 78.8  & 71.7  & 76.7  & - \\
\cline{3-3}      &       & DropEdge & \textbf{-} & \textbf{80.2} & \textbf{80.2} & \textbf{80.1} & \textbf{80.0} & \textbf{80.0} & \textbf{+3.0} \\
\cline{2-10}      & \multirow{2}[4]{*}{APPNP} & Original & -     & 80.3  & 80.5  & 80.2  & 79.9  & 80.4  & - \\
\cline{3-3}      &       & DropEdge & \textbf{-} & \textbf{80.8} & \textbf{80.9} & \textbf{81.1} & \textbf{81.2} & \textbf{81.3} & \textbf{+0.8} \\
\hline
\multirow{10}[20]{*}{Pubmed} & \multirow{2}[4]{*}{GCN} & Original & 89.5  & 89.2  & 88.3  & 87.7  & 78.6  & 72.7  & - \\
\cline{3-3}      &       & DropEdge & \textbf{89.7} & \textbf{90.9} & \textbf{91.0} & \textbf{90.5} & \textbf{80.1} & \textbf{77.5} & \textbf{+2.3} \\
\cline{2-10}      & \multirow{2}[4]{*}{GCN-b} & Original & 90.2  & 88.7  & 90.1  & 88.1  & 84.6  & \textbf{79.7} & - \\
\cline{3-3}      &       & DropEdge & \textbf{91.2} & \textbf{91.3} & \textbf{90.9} & \textbf{90.3} & \textbf{86.2} & 79.0  & \textbf{+1.2} \\
\cline{2-10}      & \multirow{2}[4]{*}{ResGCN} & Original & -     & 90.7  & 89.6  & 89.6  & 90.2  & 87.9  & - \\
\cline{3-3}      &       & DropEdge & \textbf{-} & \textbf{90.7} & \textbf{90.5} & \textbf{91.0} & \textbf{91.1} & \textbf{90.2} & \textbf{+1.1} \\
\cline{2-10}      & \multirow{2}[4]{*}{JKNet} & Original & -     & 90.5  & 90.6  & 89.9  & 89.2  & 90.6  & - \\
\cline{3-3}      &       & DropEdge & \textbf{-} & \textbf{91.3} & \textbf{91.2} & \textbf{91.5} & \textbf{91.3} & \textbf{91.6} & \textbf{+1.2} \\
\cline{2-10}      & \multirow{2}[4]{*}{APPNP} & Original & -     & 90.4  & 90.3  & 89.8  & 90.0  & 90.3  & - \\
\cline{3-3}      &       & DropEdge & \textbf{-} & \textbf{90.7} & \textbf{90.4} & \textbf{90.3} & \textbf{90.5} & \textbf{90.5} & \textbf{+0.3} \\
\hline
\multirow{10}[20]{*}{Reddit} & \multirow{2}[4]{*}{GCN} & Original & 95.75 & 96.08 & 96.43 & 79.87 & 44.36 & -     & - \\
\cline{3-3}      &       & DropEdge & \textbf{95.93} & \textbf{96.23} & \textbf{96.57} & \textbf{89.02} & \textbf{66.18} & - & \textbf{+6.3}  \\
\cline{2-10}      & \multirow{2}[4]{*}{GCN-b} & Original & 96.11 & 96.62 & 96.17 & 67.11 & 45.55 & -     & - \\
\cline{3-3}      &       & DropEdge & \textbf{96.13} & \textbf{96.71} & \textbf{96.48} & \textbf{90.54} & \textbf{50.51} & - & \textbf{+5.8}  \\
\cline{2-10}      & \multirow{2}[4]{*}{ResGCN} & Original & -     & 96.13 & 96.37 & 96.34 & 93.93 & -     & - \\
\cline{3-3}      &       & DropEdge & \textbf{-} & \textbf{96.33} & \textbf{96.46} & \textbf{96.48} & \textbf{94.27} & \textbf{-} & \textbf{+0.2} \\
\cline{2-10}      & \multirow{2}[4]{*}{JKNet} & Original & -     & 96.54 & 96.82 & 95.91 & 95.42 & -     & - \\
\cline{3-3}      &       & DropEdge & \textbf{-} & \textbf{96.75} & \textbf{97.02} & \textbf{96.39} & \textbf{95.68} & \textbf{-} & \textbf{+0.3} \\
\cline{2-10}      & \multirow{2}[4]{*}{APPNP} & Original & -     & 95.84 & 95.77 & 95.64 & 95.59 & -     & - \\
\cline{3-3}      &       & DropEdge & \textbf{-} & \textbf{95.89} & \textbf{95.91} & \textbf{95.76} & \textbf{95.73} & - & \textbf{+0.1} \\
\bottomrule
\end{tabular}%
  \label{tab:overall_res}%
\end{table*}

\subsection{Evaluating DropEdge on different GCNs}\label{sec.cmpdropedge}

In this section, we are interested in if applying DropEdge can promote the performance of the aforementioned four GCN models on real node classification benchmarks: Cora, Citeseer, Pubmed, and Reddit. We further implement JKNet~\cite{xu2018representation} and carry out DropEdge on it. 
We denote each model X of depth $d$ as X-$d$ for short in what follows, \emph{e.g.} GCN-4 denotes the 4-layer GCN.


\textbf{Implementations.}
Different from~\textsection~\ref{sec:small-cora}, the parameters of all models are trainable and initialized with the method proposed by~\cite{Kipf2017}, and the ReLU function is added. We implement all models on all datasets with the depth $d\in\{2,4,8,16,32,64\}$ and the hidden dimension $128$. For Reddit, the maximum depth is 32 considering the memory bottleneck. Since different structure exhibits different training dynamics on different dataset, to enable more robust comparisons, we perform random hyper-parameter search for each model, and report the case giving the best accuracy on validation set of each benchmark. The searching space of hyper-parameters and more details are provided in Tab.~7 in Appendix~D. Tab.~8 depicts different type of normalizing the adjacency matrix, and the selection of normalization is treated as a hyper-parameter. Regarding the same architecture w or w/o DropEdge, we apply the same set of hyper-parameters except the drop rate $p$ for fair evaluation. We adopt the Adam optimizer for model training. To ensure the re-productivity of the results, the seeds of the random numbers of all experiments are set to the same. We fix the number of training epoch to $400$ for all datasets. All experiments are conducted on a NVIDIA Tesla P40 GPU with 24GB memory. Tab.~9 in Appendix summaries the hyper-parameters of each backbone with the best accuracy on different datasets.

\textbf{Overall Results.}
Tab.~\ref{tab:overall_res} summaries the results on all datasets. We have the following observations:
\begin{itemize}
    \item DropEdge consistently improves the testing accuracy of the model without DropEdge for all cases. On Citeseer, for example, ResGCN-64 fails to produce meaningful classification performance while ResGCN-64 with DropEdge still delivers promising result.
    \item For deep models, GCN-b, ResGCN, and APPNP generally outperform generic GCN with or without DropEdge, which is consistent with our before theoretical analyses. In particular, APPNP+DropEdge performs best on Cora and Citeseer, while JKNet+DropEdge yields the best result on Pubmed and Reddit, showing the compatibility of DropEdge to modern architectures.
    \item After using DropEdge, all models normally achieves the highest accuracy when the depth is sufficiently large. For instance, both GCN and GCN-b perform worse when $d$ increases, but with DropEdge, they both arrive at the peak when $d=4$, probably thanks to the alleviation of over-soothing by DropEdge.
\end{itemize}
In addition, the validation losses of all 4-layer and 6-layer models on Cora and Citeseer are shown in Figure~\ref{fig.dropvallosscmpaddtional}. The curves of both training and validation are dramatically pulled down after applying DropEdge, which also explain the benefit of DropEdge.

\begin{table*}[t!]
  \centering
  \caption{Test Accuracy (\%) comparison with SOTAs. The number in parenthesis denotes the network
depth.}
  \vskip -0.1 in
 \renewcommand\arraystretch{0.9}
\setlength{\tabcolsep}{21pt}
    \begin{tabular}{c|c|c|c|c}
    \toprule
    & Cora & Citeseer & Pubmed & Reddit \\
    \midrule
  KLED~\cite{Fouss06anexperimental}  & $82.3$ & - & $82.3$    & -       \\
  DCNN~\cite{atwood2016diffusion}  & $86.8$ & - & $89.8$    & -       \\
  GAT~\cite{DBLP:journals/corr/abs-1710-10903}   & $87.4$ & $78.6$ & $89.7$ &- \\
  FastGCN~\cite{chen2018fastgcn} & $85.0$ & $77.6$ & $88.0$ & $93.7$ \\
  AS-GCN~\cite{Huang2018} & $87.4$ & $79.7$ & $90.6$ & $96.3$ \\
  GraphSAGE~\cite{hamilton2017inductive} & $82.2$ & $71.4$ & $87.1$ & $94.3$ \\
   DAGNN~\cite{liu2020towards} & $88.4$ & $78.6$ & $86.4$  & OOM \\
	\hline
     \hline
    GCN+DropEdge      & $86.6(4)$ 	  	 & $79.0(4)$   & $91.0(8)$        & $96.57(8)$ \\
    GCN-b+DropEdge    & $87.6(4)$   		 & $79.2(4)$   & $91.3(4)$    & $96.71(4)$ \\
    ResGCN+DropEdge   & $87.0(4)$   		 & $79.4(16)$   & $91.1(32)$  & $96.48(16)$\\
    APPNP+DropEdge    & $\textbf{89.1(64)}$ & $\textbf{81.3(64)}$         & $90.7(4)$  & $95.91(8)$\\
    JKNet+DropEdge    & $88.0(16)$ & $80.2(8)$  & $\textbf{91.6(64)}$    & $\textbf{97.02(8)}$\\
    DAGNN+DropEdge &$88.7(8)$ &$79.5(8)$ & $87.1(16)$ & OOM \\
    \bottomrule
    \end{tabular}%
  \label{tab:full_sota}%
\end{table*}

\textbf{Comparison with SOTAs}
We select the best performance for each backbone with DropEdge, and contrast them with existing State of the Arts (SOTA), including KLED~\cite{Fouss06anexperimental}, DCNN~\cite{atwood2016diffusion}, FastGCN~\cite{chen2018fastgcn}, AS-GCN~\cite{Huang2018}, , GraphSAGE~\cite{Hamilton2017} and DAGNN~\cite{liu2020towards} in Tab.~\ref{tab:full_sota}; for the SOTA methods, we reuse the results reported in~\cite{Huang2018}. 
We have these findings in Tab.~\ref{tab:full_sota}: 
\begin{itemize}
    \item Clearly, our DropEdge obtains significant enhancement against SOTAs; particularly on Cora and Citeseer, the best accuracies by APPNP+DropEdge are 89.10\% and 81.30\%, which are clearly better than the previous best (87.44\% and 79.7\%), and obtain around 1\% improvement compared to the no-drop APPNP. Such improvement is regarded as a remarkable boost considering the challenge on these benchmarks.
    \item For most models with DropEdge, the best accuracy is obtained under the depth beyond 4, which again verifies the impact of DropEdge on formulating deep networks.
    \item As mentioned in \textsection~\ref{sec.discussions}, FastGCN, AS-GCN and GraphSAGE are considered as the DropNode extensions of GCNs. The DropEdge based approaches outperform the DropNode based variants as shown in Tab.~\ref{tab:full_sota}, which confirms the effectiveness of DropEdge. 
    \item DAGNN is a recently proposed model that is able to alleviate over-smoothing. Table~\ref{tab:full_sota} also reports the performance of DAGNN, where we have conducted DAGNN with varying depth and collect the best case on each dataset. It is observed that adding DropEdge on DAGNN can further boost the performance, implying the generality of our proposed method.
\end{itemize}

\begin{figure*}
\centering
\includegraphics [width=0.24\textwidth]{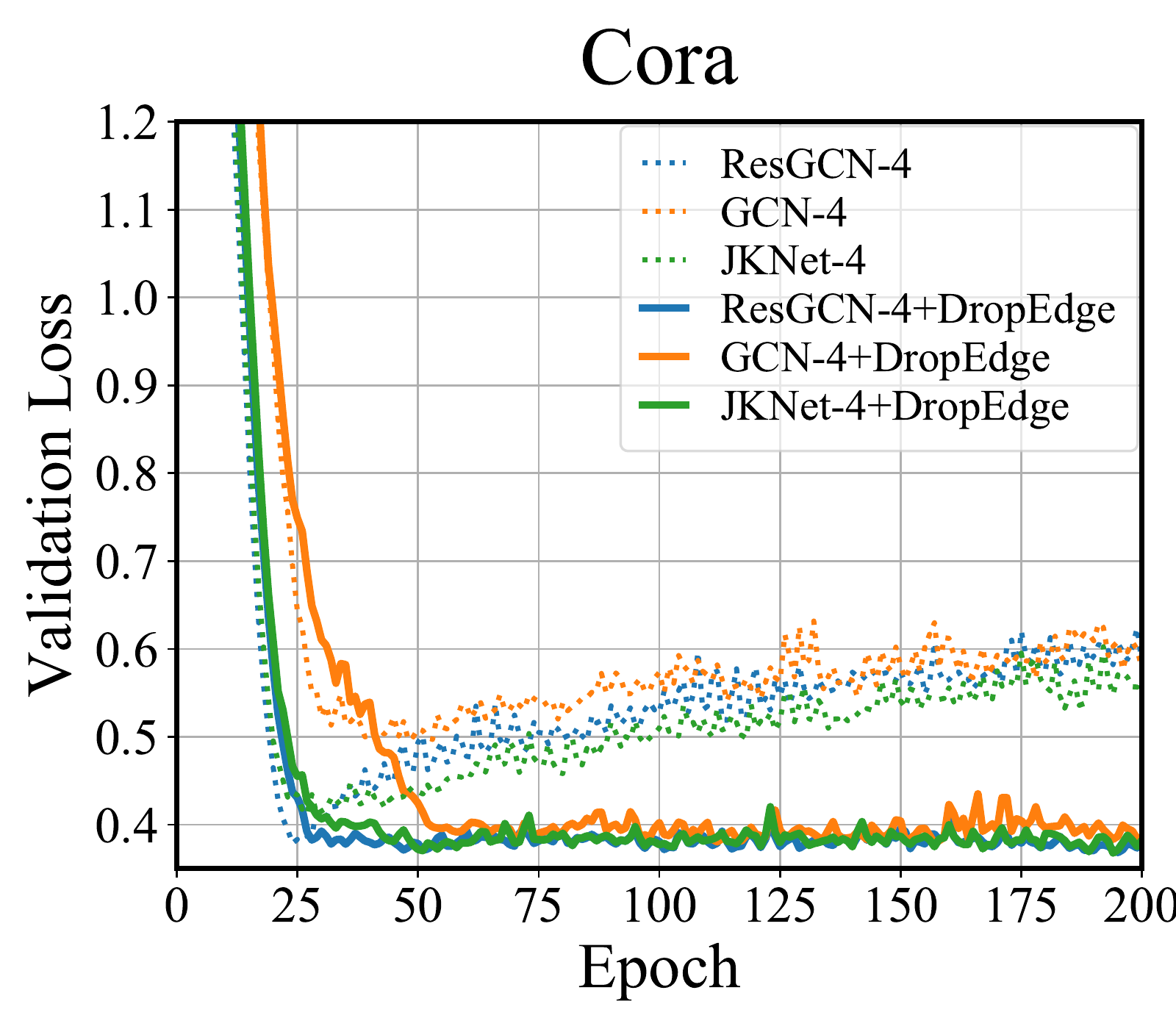}
\includegraphics [width=0.24\textwidth]{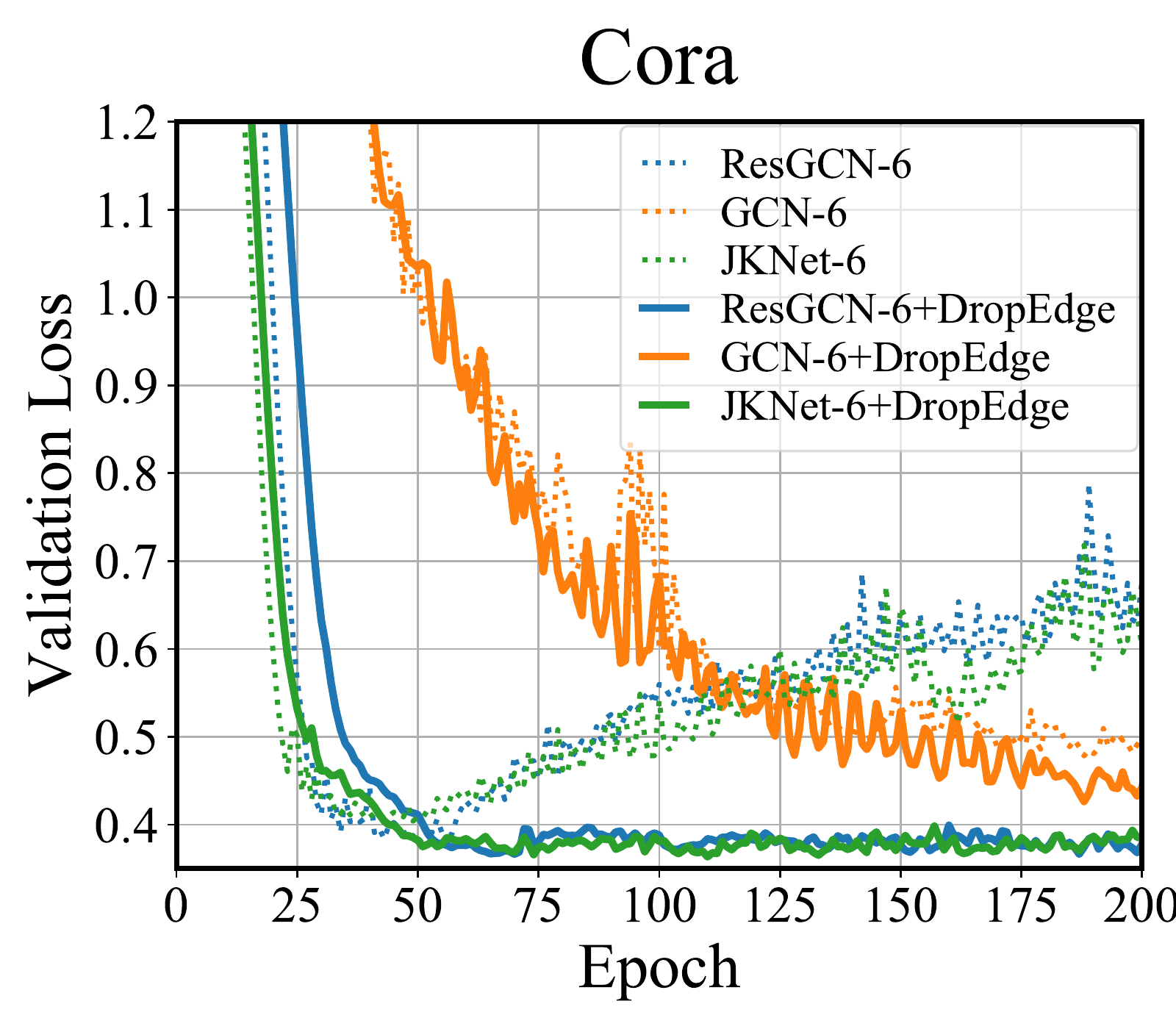}
\includegraphics [width=0.24\textwidth]{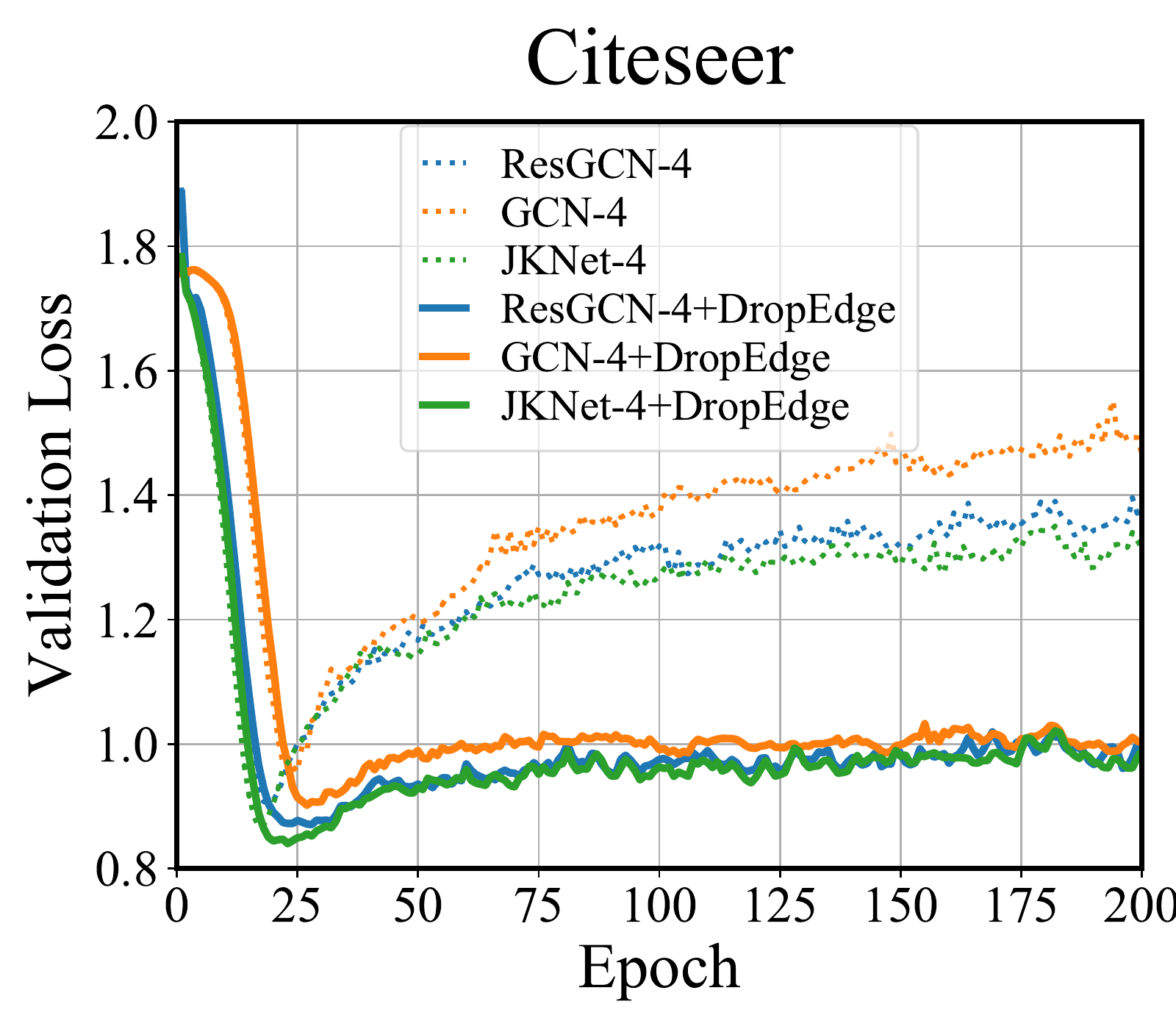}
\includegraphics [width=0.24\textwidth]{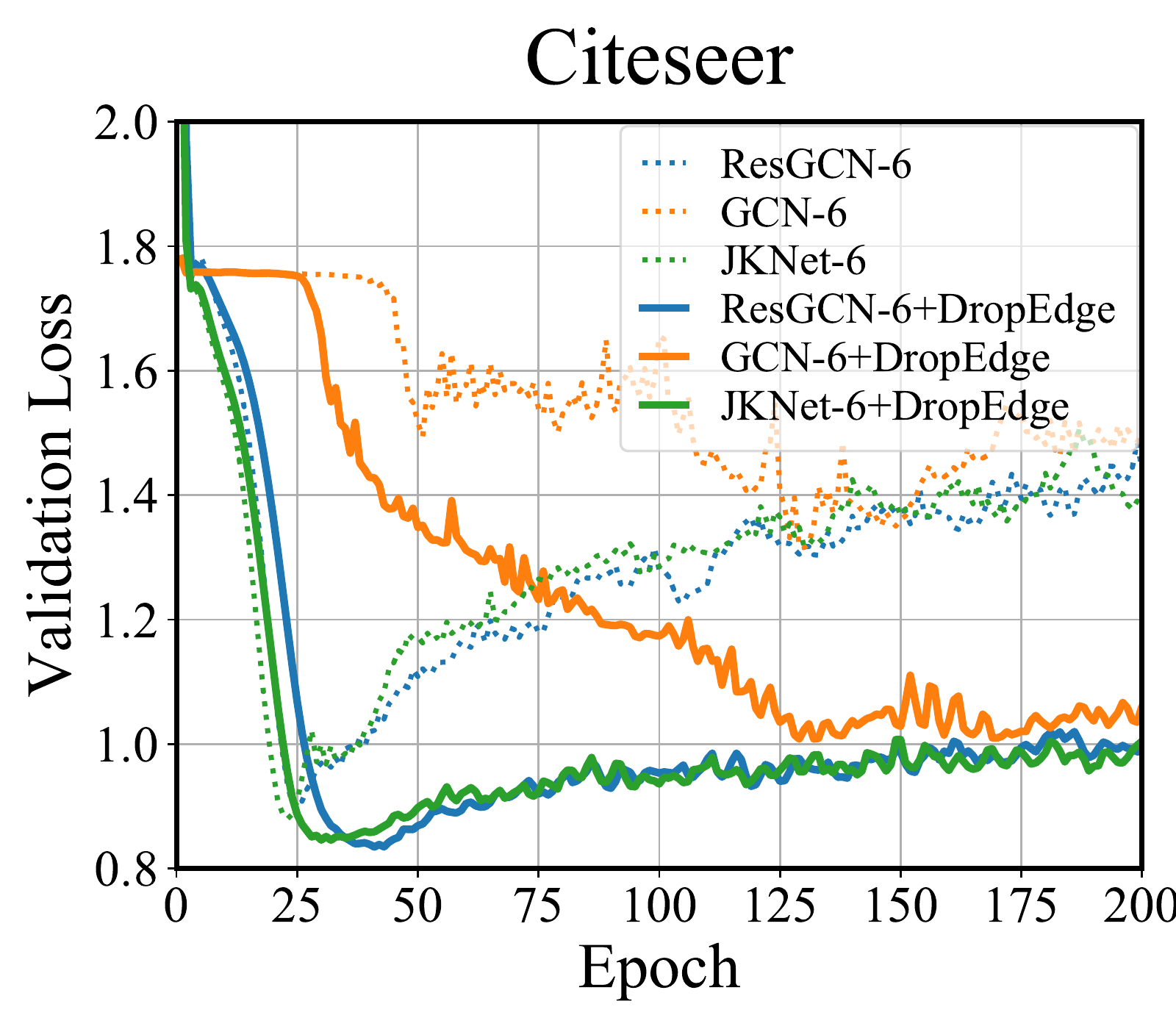}
\caption{The validation loss on different backbones w and w/o DropEdge. GCN-$n$ denotes GCN of depth $n$; similar denotation follows for other backbones.}
\label{fig.dropvallosscmpaddtional}
\end{figure*}

\begin{figure*}[t!]
    \centering
    \subfloat[]{    
    \includegraphics[width=0.24\textwidth]{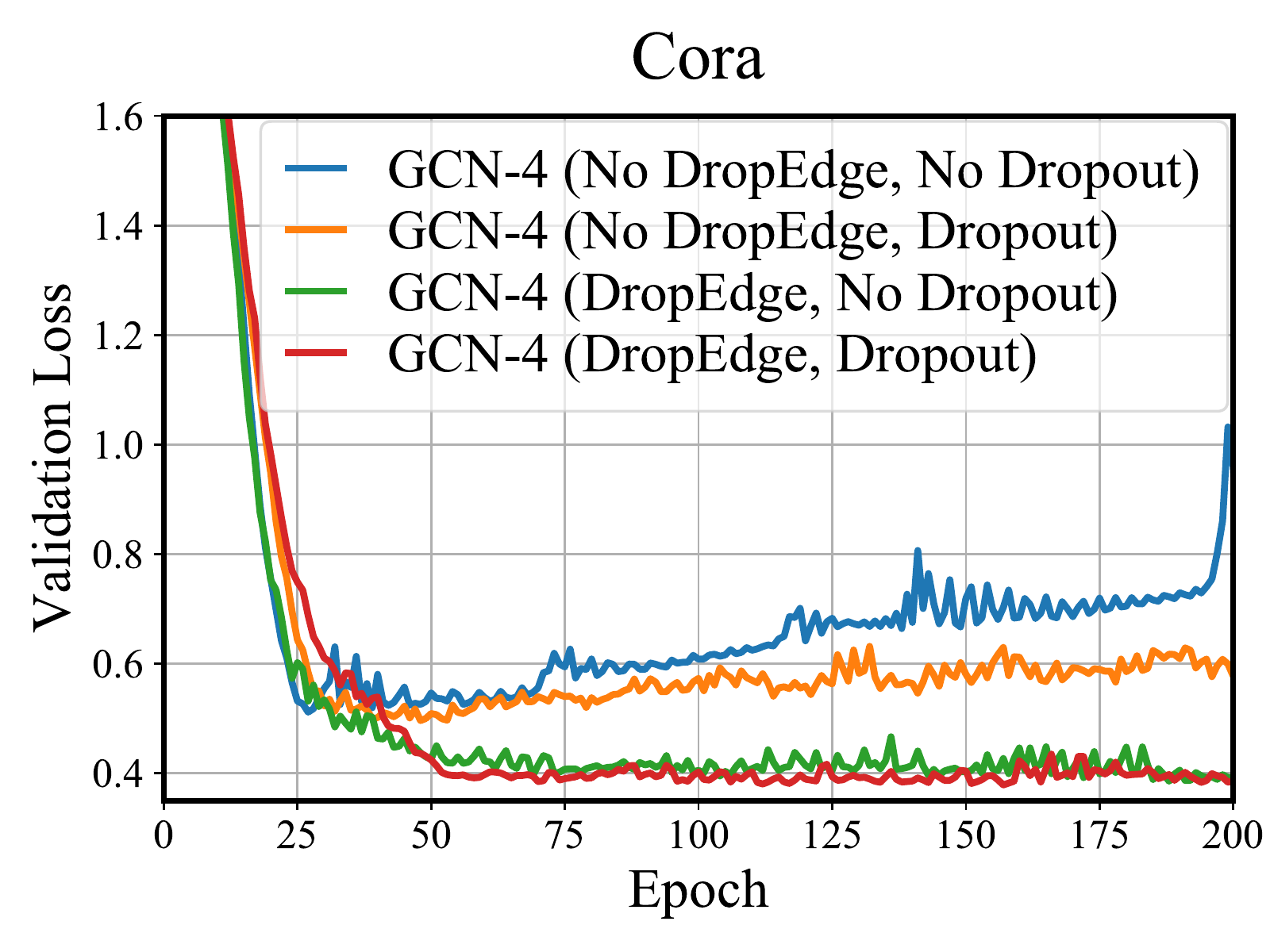}}
    \subfloat[]{
    \includegraphics [width=0.24\textwidth]{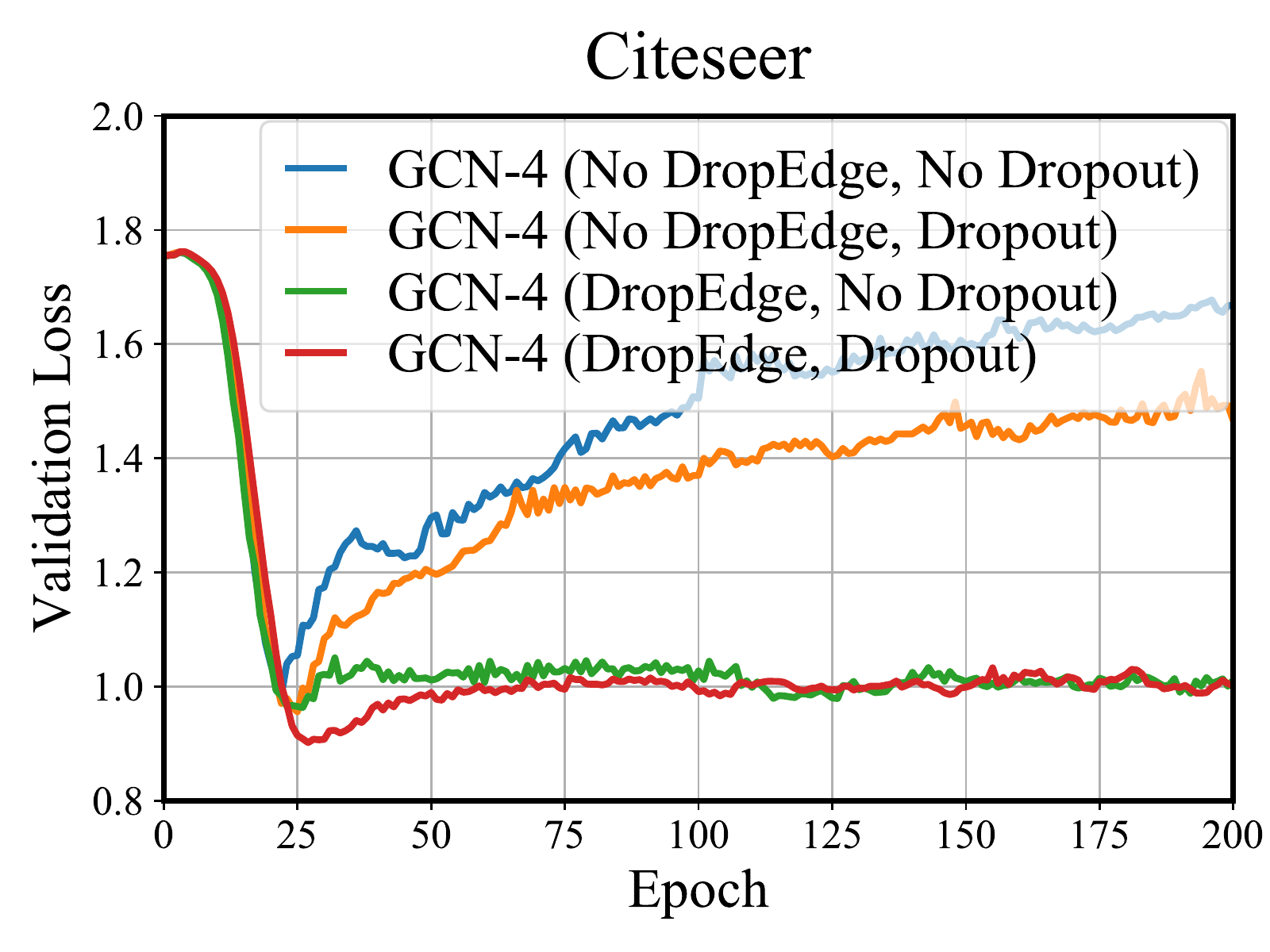}}
    \subfloat[]{
    \includegraphics[width=0.24\textwidth]{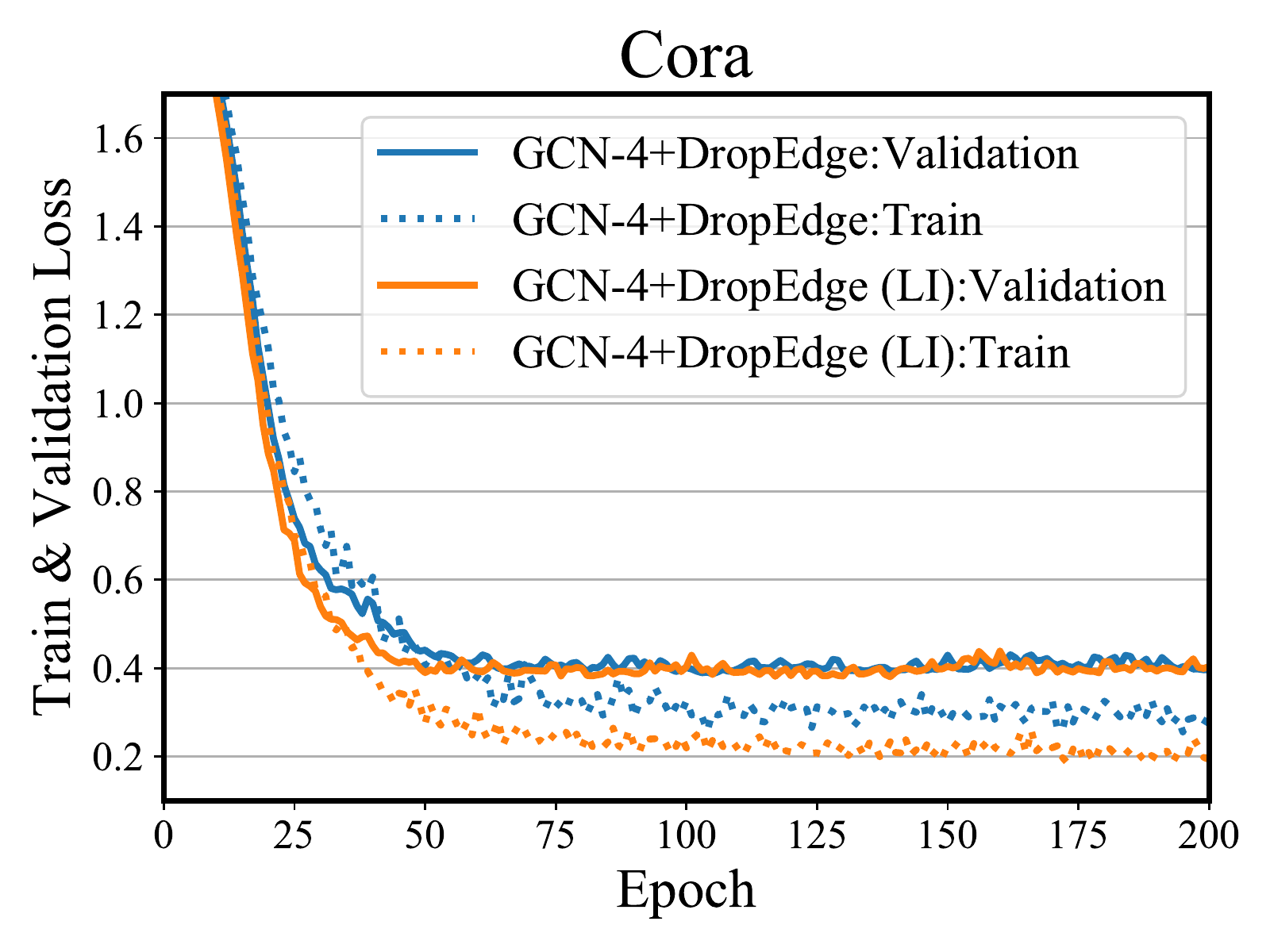}}
    \subfloat[]{
    \includegraphics [width=0.24\textwidth]{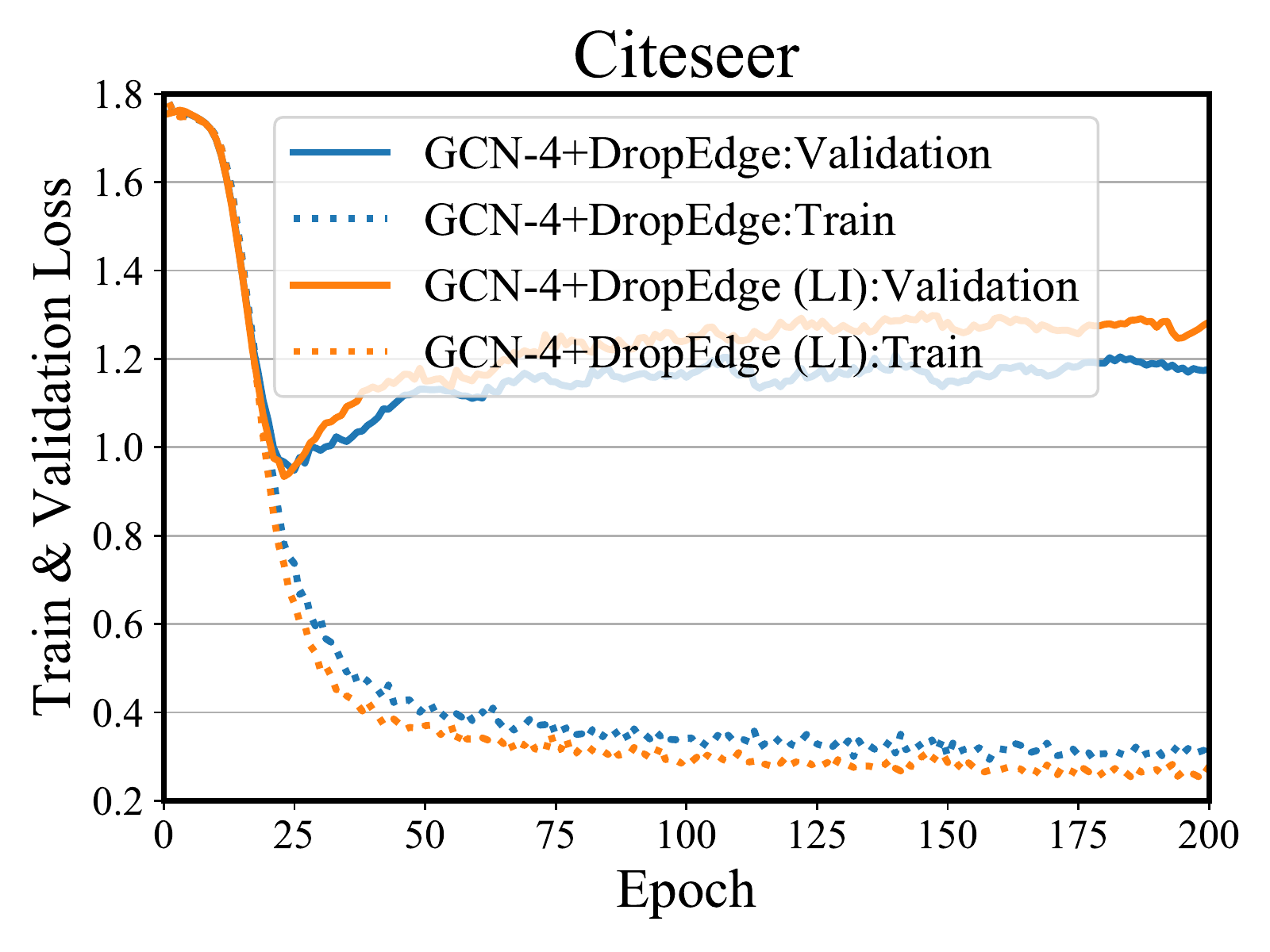}}
    \caption{(a-b) The compatibility of DropEdge with Dropout; (c-d) The performance of Layer-wise DropEdge.}
    \label{fig:more-ablation}
\end{figure*}

\subsection{Ablation Studies}
\label{sec:exp-dropedge}
This section continues several ablation studies to evaluate the importance of each proposed component in DropEdge. We employ GCN as the backbone. The hidden dimension, learning rate and weight decay are fixed to 256, 0.005 and 0.0005, receptively. The random seed is fixed. 
Unless otherwise mentioned, we do not utilize the ``withloop'' and ``withbn'' operation (see their definitions in Tab.~7 in Appendix~D).

\subsubsection{On Compatibility with Dropout}
\textsection~\ref{sec.discussions} has discussed the difference between DropEdge and Dropout. Hence, we conduct an ablation study on GCN-4, and the validation losses are demonstrated in Figure~\ref{fig:more-ablation} (a-b). It reads that while both Dropout and DropEdge are able to facilitate the training of GCN, the improvement by DropEdge is more significant, and if we adopt them concurrently, the loss is decreased further, indicating the compatibility of DropEdge with Dropout.

\subsubsection{On Layer-wise DropEdge}
\textsection~\ref{sec:methodology} has descried the Layer-Wise (LW) extension of DropEdge. Here, we provide the experimental evaluation on assessing its effect. As observed from Figure~\ref{fig:more-ablation} (c-d), the LW DropEdge achieves lower training loss than the original version, whereas the validation value between two models is comparable. It implies that LW DropEdge can facilitate the training further than original DropEdge. However, we prefer to use DropEdge other than the LW variant so as to not only avoid the risk of over-fitting but also reduces computational complexity since LW DropEdge demands to sample each layer and spends more time.

\begin{table}[htbp]
  \centering
  \caption{The performance comparison of ER-Graph, GLASSO (GGL) and DropEdge.}
  \setlength{\tabcolsep}{1pt}
    \begin{tabular}{cccccc}
    \hline
    \multicolumn{1}{l}{Dataset} & Backbone   & \multicolumn{1}{l}{Original} & \multicolumn{1}{l}{ER-Graph} & \multicolumn{1}{l}{GLASSO} & \multicolumn{1}{l}{DropEdge} \bigstrut\\
    \hline
    {Cora} & GCN-b      & 0.831 & 0.319 & 0.432 & 0.849 \bigstrut\\ 
    \hline
    {Citeseer} & GCN-b    & 0.715 & 0.233  & 0.220 & 0.763 \bigstrut\\ 
    \hline
    {Pubmed} & GCN-b    & 0.850 & 0.407 & -- & 0.861 \bigstrut\\ 
    \hline
    \end{tabular}%

  \label{tab:ergraph}%
\end{table}%

\begin{table}[htbp]
  \centering
  \caption{The running time of GLASSO (GGL) and DropEdge.}
    \begin{tabular}{ccrrr}
    \hline
    \multicolumn{1}{l}{Dataset} & Backbone & \multicolumn{1}{l}{Original} & \multicolumn{1}{l}{GLASSO} & \multicolumn{1}{l}{DropEdge} \bigstrut\\
    \hline
    {Cora} & GCN-b    & 7.77s  & 95.27s & 8.73s \bigstrut\\
    \hline
    {Citeseer} & GCN-b  & 13.49s & 328.63s   & 15.71s \bigstrut\\
    \hline
    {Pubmed} & GCN-b  & 33.26s & >40h & 35.98s \bigstrut\\
    \hline
    \end{tabular}%
  \label{tab:glassotime}%
\end{table}%

{
\subsubsection{On Comparisons with ER-Graph and GLASSO}
\label{sec:glasso}

We have conducted an ablation study of using random graphs created by the Erdos-Renyi model~\cite{erdos1960evolution} with the number of edges equal to the graph after DropEdge. We train the model GCN-b with 6 layers on these random graphs following the same setting as DropEdge for fair comparisons, and summarize the averaged results over 20 runs in Table~\ref{tab:ergraph}. It shows that all results are much worse than DropEdge probably because the generated graphs by the ER method are inconsistent to the generic statistics of edges in the original graph. On the contrary, the performance by DropEdge is always promising.

In addition, we have provided the experimental comparisons between DropEdge and GLASSO~\cite{friedman2008sparse} in Table~\ref{tab:ergraph}. For GLASSO, we compute the empirical covariance matrix based on node feature vectors, and leverage the source code\footnote{\url{https://github.com/STAC-USC/Graph_Learning}} built by [2] to perform constrained GLASSO. In particular, we select the GGL setting (Problem 1 in [2]) to enforce the off-diagonal elements of the learned graph Laplacian to be non-positive, in order to output a positive graph as expected. To keep fair comparisons, we choose appropriate hyper-parameters (my\_eps\_outer = 1.00E-05; alpha=4.00E-05 on Cora, alpha=2.00E-06 on Citeseer) to ensure that the number of edges by GGL is equal to that yielded by DropEdge. We then carry out GCN on the learned sparse graphs. From Table~\ref{tab:ergraph}, GGL gets much worse performance than DropEdge and the full-graph version on the Cora and Citeseer datasets. GGL is unable to maintain the whole information of edge connections due to the lack of the information from the original adjacency matrix, while DropEdge is capable of alleviating over-smoothing by edge sampling and still preserve the whole information during the entire training phase. Another drawback of GLASSO is that the optimization process for sparsification is time-consuming. Table~\ref{tab:glassotime} displays the training time over 400 epochs. It is observed that GLASSO involves a large amount of extra running time, making it unpractical for large-scale datasets; we can not obtain reasonable result by conducting GGL on the Pubmed dataset ($\sim$ 20,000 nodes) even after 40-hour computation.  

}

\begin{table}[htbp]
  \centering
  \caption{ The performance comparison of uniform dropping and feature weighted dropping.}
    \begin{tabular}{cccc}
    \hline
     Dataset     &  Backbone     & \multicolumn{1}{l}{Feature Weighted} & \multicolumn{1}{l}{Uniform} \bigstrut\\
    \hline
    {Cora} & GCN-b   & 0.856 & 0.876 \bigstrut\\
    \hline
    {Citeseer} & GCN-b   & 0.797 & 0.792 \bigstrut\\
    \hline
    {Pubmed} & GCN-b   & 0.888 & 0.913 \bigstrut\\
    \hline
    \end{tabular}%
  \label{tab:feaweighted}%
\end{table}%

{
\subsubsection{On Comparisons with the Feature-based Sampling}

It is possible to take the pairwise correlations/similarities into account in DropEdge. However, this will make DropEdge biased, focusing more on node feature correlations but less on authentic edge connections provided in the adjacency matrix $\mA$. As already explained before, we suppose the sampling in DropEdge to be unbiased so as to keep the original distribution of edge connections. If using the biased version, it will somehow pollute the information in $\mA$ and could cause performance detriment. To show this, we have implemented the experiment on GCN-b with 4 layers, by first computing the similarity score between node features via the RBF kernel and then dropping edges with the probability negatively related with the similarity score. All other settings keep the same as our previous experiments in Section 5. As observed from Table~\ref{tab:feaweighted}, the biased version performs slightly better than our unbiased method only  on Citeseer, but it is always worse in other cases. This observation supports the superiority of the unbiased sampling in general. We really appreciate the advice by the reviewer and have contained the corresponding discussions in Section 5.3 to make our paper more enhanced.  

}

\section{Conclusion}
{
We have analyzed the universal process of over-smoothing for 4 popular GCN models, including generic GCN, GCN with bias, ResGCN, and APPNP. Upon our analyses, we propose DropEdge, a novel and efficient technique to facilitate the development of general GCNs. Considerable experiments on Cora, Citeseer, Pubmed and Reddit have verified that DropEdge can generally and consistently promote the performance of current popular GCNs. 
Note that the experimental improvement from shallow 2-layer models to deep models (with DropEdge) is not big, but still meaningful. For example, on Pubmed, 8-layer GCN with DropEdge (91.0\%) outperforms 2-layer GCN without DropEdge (89.5\%) by 1.5\%, which is considered as being remarkable for this benchmark. More importantly, besides the experimental improvement, we have theoretically explained why deep GCN fails and how over-smoothing happens for general GCN models. 
Even we have not achieved sufficient benefits from deep GCNs as we did in other domains such as CNNs on images, both the theoretical analyses and experimental evaluations in this paper are still valuable to facilitate a broader class of future work in graph learning.
}
\ifCLASSOPTIONcaptionsoff
  \newpage
\fi



\bibliographystyle{IEEEtran}
\bibliography{IEEEabrv,ref}
%



%

\begin{IEEEbiography}[{\includegraphics[width=1in,height=1.25in,clip,keepaspectratio]{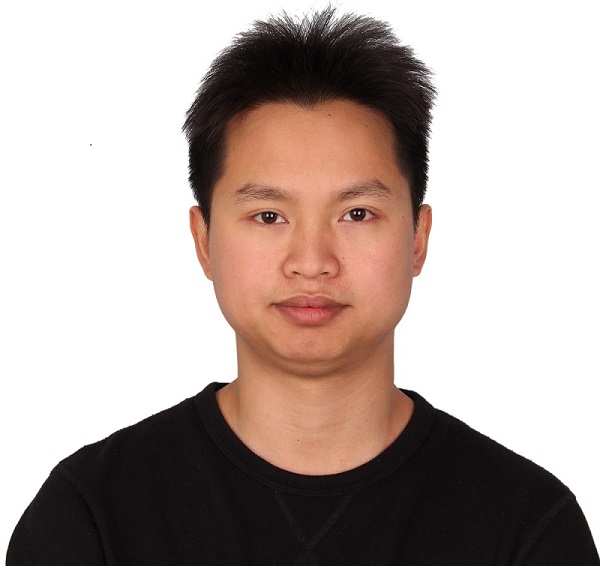}}]{Wenbing Huang} is now an assistant researcher at Department of Computer Science and Technology, Tsinghua University. He received his Ph.D. degree of computer science and technology from Tsinghua University in 2017. His current research mainly lies in the areas of machine learning, computer vision, and robotics, with particular focus on learning on irregular structures including graphs and videos. He has published about 30 peer-reviewed top-tier conference and journal papers, including the Proceedings of NeurIPS, ICLR, ICML, CVPR, etc. He served (will serve) as a Senior Program Committee of AAAI 2021, Area Chair of ACMMMM workshop HUMA 2020, and Session Chair of IJCAI 2019.
\end{IEEEbiography}

\begin{IEEEbiography}[{\includegraphics[width=1in,height=1.25in,clip,keepaspectratio]{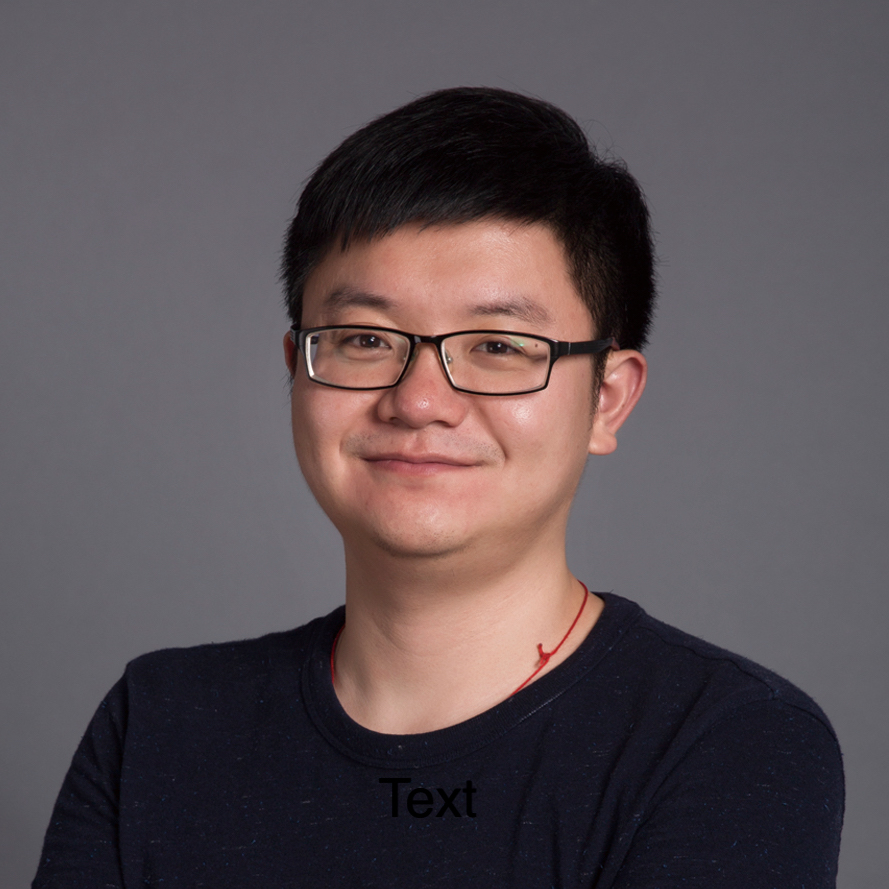}}]{Yu Rong} is a Senior researcher of Machine Learning Center in Tencent AI Lab. He received the Ph.D. degree from The Chinese University of Hong Kong in 2016. He joined Tencent AI Lab in June 2017.  His main research interests include social network analysis, graph neural networks, and large-scale graph systems. In Tencent AI Lab, he is working on building the large-scale graph learning framework and applying the deep graph learning model to various applications, such as ADMET prediction and malicious detection. He has published several papers on data mining, machine learning top conferences, including the Proceedings of KDD, WWW, NeurIPS, ICLR, CVPR, ICCV, etc. 
\end{IEEEbiography}

\begin{IEEEbiography}[{\includegraphics[width=1in,height=1.25in,clip,keepaspectratio]{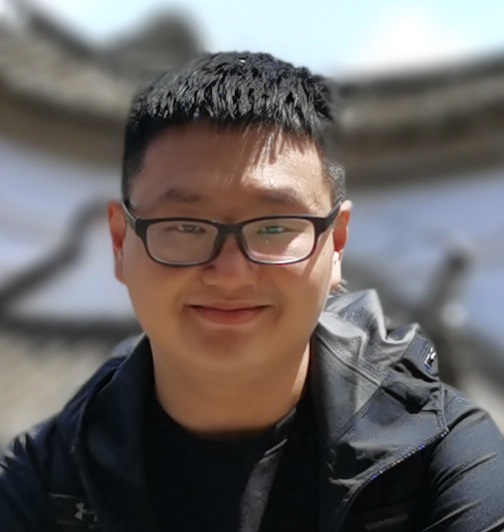}}]{Tingyang Xu} is a Senior researcher of Machine Learning Center in Tencent AI Lab. He obtained the Ph.D. degree from The University of Connecticut in 2017 and joined Tencent AI Lab in July 2017. In Tencent AI Lab, he is working on deep graph learning, graph generations and applying the deep graph learning model to various applications, such as molecular generation and rumor detection. His main research interests include social network analysis, graph neural networks, and graph generations, with particular focus on design deep and complex graph learning models for molecular generations. He has published several papers on data mining, machine learning top conferences KDD, WWW, NeurIPS, ICLR, CVPR, ICML, etc.
\end{IEEEbiography}

\begin{IEEEbiography}[{\includegraphics[width=1in,height=1.25in,clip,keepaspectratio]{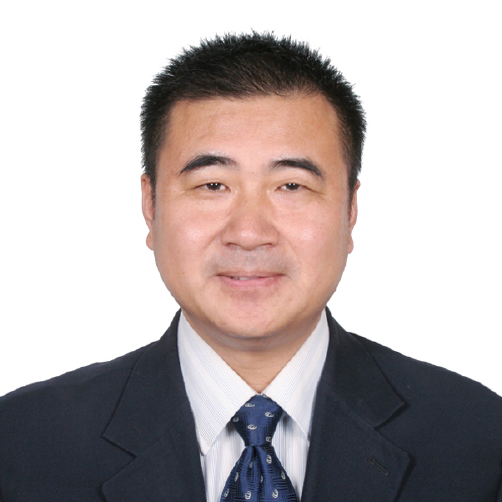}}]{Fuchun Sun}, IEEE Fellow, received the Ph.D. degree in computer science and technology from Tsinghua University, Beijing, China, in 1997. He is currently a Professor with the Department of Computer Science and Technology and President of Academic Committee of the Department, Tsinghua University, deputy director of State Key Lab. of Intelligent Technology and Systems, Beijing, China. His research interests include intelligent control and robotics, information sensing and processing in artificial cognitive systems, and networked control systems. He was recognized as a Distinguished Young Scholar in 2006 by the Natural Science Foundation of China. He became a member of the Technical Committee on Intelligent Control of the IEEE Control System Society in 2006. He serves as Editor-in-Chief of International Journal on \textit{Cognitive Computation and Systems}, and an Associate Editor for a series of international journals including the IEEE \textsc{Transactions on Cognitive and Developmental Systems}, the IEEE \textsc{Transactions on Fuzzy Systems}, and the IEEE \textsc{Transactions on Systems, Man, and Cybernetics: Systems}.
\end{IEEEbiography}

\begin{IEEEbiography}[{\includegraphics[width=1in,height=1.25in,clip,keepaspectratio]{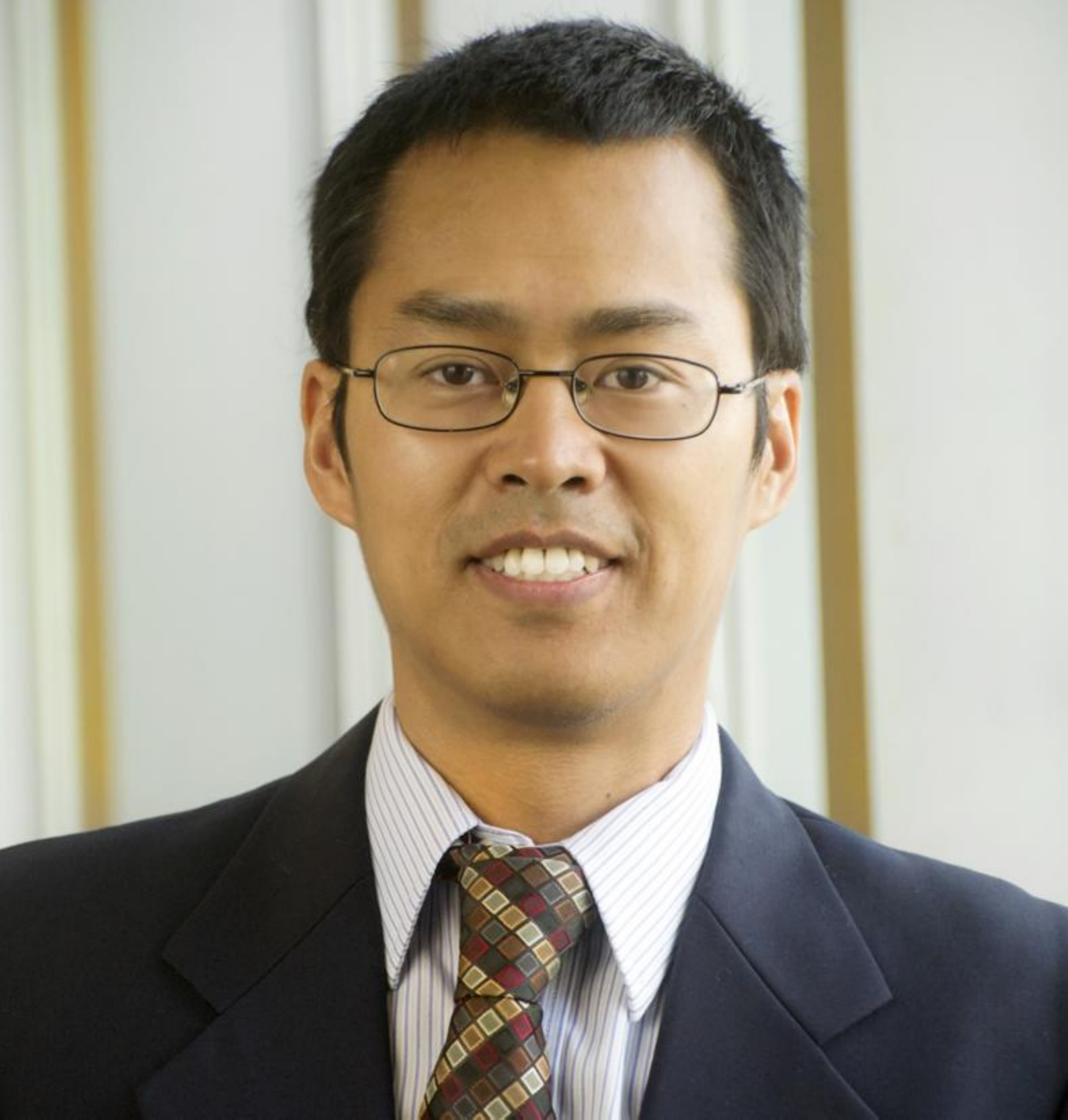}}]{Junzhou Huang}
is an Associate Professor in the Computer Science and Engineering department at the University of Texas at Arlington. He also served as the director of machine learning center in Tencent AI Lab. He received the B.E. degree from Huazhong University of Science and Technology, Wuhan, China, the M.S. degree from the Institute of Automation, Chinese Academy of Sciences, Beijing, China, and the Ph.D. degree in Computer Science at Rutgers, The State University of New Jersey. His major research interests include machine learning, computer vision and imaging informatics. He was selected as one of the 10 emerging leaders in multimedia and signal processing by the IBM T.J. Watson Research Center in 2010. His work won the MICCAI Young Scientist Award 2010, the FIMH Best Paper Award 2011, the MICCAI Young Scientist Award Finalist 2011, the STMI Best Paper Award 2012, the NIPS Best Reviewer Award 2013, the MICCAI Best Student Paper Award Finalist 2014 and the MICCAI Best Student Paper Award 2015. He received the NSF CAREER Award in 2016. 
\end{IEEEbiography}









\end{document}


%
\title{Appendices for Tackling Over-Smoothing for General Graph Convolutional Networks}
%
%
%
%

\author{Wenbing Huang$^\ast$, Yu Rong$^\ast$, Tingyang Xu, Fuchun Sun, Junzhou Huang
\IEEEcompsocitemizethanks{
\IEEEcompsocthanksitem Wenbing Huang (hwenbing@126.com) and Fuchun Sun (fcsun@mail.tsinghua.edu.cn) are with Beijing National Research Center for Information Science and Technology (BNRist), State Key Lab on Intelligent Technology and Systems, Department of Computer Science and Technology, Tsinghua University. \protect\\
\IEEEcompsocthanksitem Yu Rong (yu.rong@tencent.com), Tingyang Xu (tingyangxu@tencent.com), and Junzhou Huang (joehhuang@tencent.com) are with Tencent AI Lab, Shenzhen, China.\protect\\
\IEEEcompsocthanksitem Wenbing Huang and Yu Rong contributed to this work equally. \protect \\
\IEEEcompsocthanksitem Fuchun Sun is the corresponding author. \protect \\
}
}

%
%

\markboth{IEEE Transactions on Pattern Analysis and Machine Intelligence,~Vol.~14, No.~8, August~2015}%
{Shell \MakeLowercase{\textit{et al.}}: Bare Demo of IEEEtran.cls for Computer Society Journals}
%





\maketitle

\IEEEdisplaynontitleabstractindextext

%
\IEEEpeerreviewmaketitle

\appendices
\section{Proof of Theorem 2}
\label{appendix:theorem2}
\setcounter{equation}{7}
\begin{lemma}
\label{le:property}
For any $\mH,\mB\in\R^{N\times C}$ and $\alpha_1, \alpha_2\geq0$, we have:
\begin{eqnarray}
\label{eq:A}
d_{\gM}(\hat{\mA}\mH)&\leq& \lambda d_{\gM}(\mH),\\
\label{eq:W}
d_{\gM}(\mH\mW)&\leq& sd_{\gM}(\mH),\\
\label{eq:relu}
d_{\gM}(\sigma(\mH))&\leq& d_{\gM}(\mH),\\
\label{eq:bias}
d_{\gM}(\alpha_1\mH+\alpha_2\mB)&\leq& \alpha_1 d_{\gM}(\mH)+\alpha_2 d_{\gM}(\mB),
\end{eqnarray}
where $\sigma$ is ReLU function, and the denotations of $\lambda, s, \gM$ follow Theorem 2.
\end{lemma}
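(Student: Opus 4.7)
The plan is to interpret $d_{\gM}(\mH)$ as the Frobenius distance from $\mH$ to a subspace $\gM\subseteq\R^{N\times C}$ (the ``over-smoothed'' subspace tied to the dominant eigen-subspace of $\hat{\mA}$), and to exploit the fact that $\gM$ is invariant under each of the three deterministic operations in question. The first observation I would record is that, as a distance to a linear subspace, $d_{\gM}$ is both positively homogeneous, $d_{\gM}(\alpha\mH)=\alpha\, d_{\gM}(\mH)$ for $\alpha\geq0$, and subadditive via the triangle inequality, $d_{\gM}(\mH+\mB)\leq d_{\gM}(\mH)+d_{\gM}(\mB)$. Combining these immediately gives (\ref{eq:bias}), so I would dispatch that inequality first as a warm-up.

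For (\ref{eq:A}) the strategy is to split $\mH=\mH_{\gM}+\mH_{\gM^\perp}$ into its orthogonal projections onto $\gM$ and its complement. Since $\gM$ is built from the eigenvectors of $\hat{\mA}$ associated with the largest eigenvalues, it is invariant under left multiplication by $\hat{\mA}$, and the operator norm of $\hat{\mA}$ restricted to $\gM^\perp$ equals $\lambda$ (the next eigenvalue in modulus). Thus $\hat{\mA}\mH_{\gM}\in\gM$ and $\|\hat{\mA}\mH_{\gM^\perp}\|_F\leq\lambda\|\mH_{\gM^\perp}\|_F$, so
\begin{equation*}
d_{\gM}(\hat{\mA}\mH)\leq\|\hat{\mA}\mH-\hat{\mA}\mH_{\gM}\|_F=\|\hat{\mA}\mH_{\gM^\perp}\|_F\leq\lambda\|\mH_{\gM^\perp}\|_F=\lambda\, d_{\gM}(\mH).
\end{equation*}
The same decomposition yields (\ref{eq:W}): right multiplication by $\mW$ acts on the column (feature) dimension and preserves $\gM$ because $\gM$ is defined only through column-space constraints; and $\|\mH_{\gM^\perp}\mW\|_F\leq s\|\mH_{\gM^\perp}\|_F$ with $s=\|\mW\|_2$, so the same chain of inequalities delivers the bound with $s$ in place of $\lambda$.

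The delicate inequality is (\ref{eq:relu}), since ReLU is nonlinear and projection-subtraction tricks do not directly apply. My route will be to identify an element of $\gM$ closer to $\sigma(\mH)$ than $\sigma(\mH_{\gM})$ is, and then exploit 1-Lipschitzness of ReLU. Concretely, because the ``smooth'' subspace $\gM$ (e.g.\ signals constant on connected components, or nonnegative combinations of the top eigenvectors) is closed under the entry-wise ReLU map, $\sigma(\mH_{\gM})\in\gM$. Therefore
\begin{equation*}
d_{\gM}(\sigma(\mH))\leq\|\sigma(\mH)-\sigma(\mH_{\gM})\|_F\leq\|\mH-\mH_{\gM}\|_F=d_{\gM}(\mH),
\end{equation*}
where the middle step is the entry-wise 1-Lipschitz property $|\sigma(a)-\sigma(b)|\leq|a-b|$.

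The main obstacle I anticipate is precisely the invariance claim $\sigma(\mH_{\gM})\in\gM$: it is valid for the Perron-type subspaces that arise from $\hat{\mA}$ (nonnegative eigenvectors associated with the top eigenvalue, component-wise constant signals, etc.), but not for arbitrary invariant subspaces. I would therefore make sure to reference the precise definition of $\gM$ from Theorem~2 in the main text at that step, so that the closure of $\gM$ under ReLU is transparent. Once that one structural fact is in hand, the four bounds follow in the order (\ref{eq:bias}), (\ref{eq:A}), (\ref{eq:W}), (\ref{eq:relu}) as above.
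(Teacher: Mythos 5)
Your proof is correct, and for inequalities (\ref{eq:bias}), (\ref{eq:A}) and (\ref{eq:W}) it is essentially the paper's argument in different clothing: the paper works with the explicit orthogonal complement basis $\hat{\mF}$ and the spectral decomposition $\hat{\mA}=\hat{\mE}\hat{\mE}^{\mathrm{T}}+\hat{\mF}\Lambda\hat{\mF}^{\mathrm{T}}$, writing $d_{\gM}(\mH)=\|\hat{\mF}^{\mathrm{T}}\mH\|_F$, whereas you phrase the same spectral facts via the decomposition $\mH=\mH_{\gM}+\mH_{\gM^\perp}$ and invariance of $\gM$; the two are interchangeable. Where you genuinely diverge is (\ref{eq:relu}). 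The paper proves it by decomposing $d_{\gM}^2(\mH)$ component-by-component and channel-by-channel, splitting each $\vh_{ic}$ into its nonnegative and negative parts, and showing $d^2_{\gM}(\mH)-d^2_{\gM}(\sigma(\mH))\geq 0$ via a sign analysis plus the Cauchy--Schwarz inequality. Your route---observe that $\gM$ is closed under entry-wise ReLU because its basis vectors $\ve_i$ are nonnegative with disjoint supports (so $\sigma(\hat{\mE}\mC)=\hat{\mE}\,\sigma(\mC)$), hence $\sigma(\mH_{\gM})\in\gM$, and then invoke the entry-wise $1$-Lipschitz bound $\|\sigma(\mH)-\sigma(\mH_{\gM})\|_F\leq\|\mH-\mH_{\gM}\|_F$---is shorter, avoids the sign bookkeeping entirely, and makes transparent exactly which structural property of $\gM$ is being used (you rightly flag that this is the one step that would fail for a generic invariant subspace). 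The paper's computation, on the other hand, yields slightly more information, namely an explicit nonnegative expression for the gap $d^2_{\gM}(\mH)-d^2_{\gM}(\sigma(\mH))$. Both are valid; yours is the cleaner proof of the stated inequality.
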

\begin{proof}

Oono \& Suzuki~\cite{oono2019asymptotic} have proved the first three inequalities. Their proof is based on eigen-decomposition with Kronecker product, which is sort of tedious. Here, we additionally discuss Ineq.~\ref{eq:bias}, and prove all the four inequalities in a new and concise way. 

Our proof is mainly based on the notion of projection~\cite{horn2012matrix} that returns the projected vector/matrix onto a subspace from any given vector/matrix. In terms of the subspace $\gM$, the projection matrix is given by $\hat{\mE}\hat{\mE}^{\mathrm{T}}$, where $\hat{\mE}$ is the normalized bases of the subspace $\gM$ defined in Definition 1. We also define the orthogonal complement of $\hat{\mE}$ as $\hat{\mF}$. Then, the distance $d_{\gM}(\mH)$ of arbitrary $\mH$ is derived as 
\begin{eqnarray}
\label{eq:dM-1}
d_{\gM}(\mH) &=& \|(\mI-\hat{\mE}\hat{\mE}^{\mathrm{T}})\mH\|_F\\
\nonumber
&=& \|\hat{\mF}\hat{\mF}^{\mathrm{T}}\mH\|_F\\
\nonumber
&=& \text{tr}\left((\hat{\mF}\hat{\mF}^{\mathrm{T}}\mH)^{\mathrm{T}}(\hat{\mF}\hat{\mF}^{\mathrm{T}}\mH)\right)\\
\nonumber
&=& \text{tr}\left((\hat{\mF}^{\mathrm{T}}\mH)^{\mathrm{T}}(\hat{\mF}^{\mathrm{T}}\mH)\right) \\
\label{eq:dM-F}
&=& \|\hat{\mF}^{\mathrm{T}}\mH\|_F.
\end{eqnarray}

With Eq.~\ref{eq:dM-F} at hand, we justify Ineq.~\ref{eq:A},~\ref{eq:W}, and~\ref{eq:bias} by
\begin{eqnarray}
\nonumber
d_{\gM}(\hat{\mA}\mH) &=& \|\hat{\mF}^{\mathrm{T}}(\hat{\mA}\mH)\|_F \\
\nonumber
&=& \|\hat{\mF}^{\mathrm{T}}(\hat{\mE}\hat{\mE}^{\mathrm{T}}+\hat{\mF}\Lambda\hat{\mF}^{\mathrm{T}})\mH\|_F \\
\nonumber
&=& \|\Lambda\hat{\mF}^{\mathrm{T}}\mH\|_F \\
\nonumber
&\leq& \sigma_{\text{max}}(\Lambda)\|\hat{\mF}^{\mathrm{T}}\mH\|_F \\
&\leq& \lambda d_{\gM}(\mH),
\end{eqnarray}
where we have applied the fact $\hat{\mA}=\hat{\mE}\hat{\mE}^{\mathrm{T}}+\hat{\mF}\Lambda\hat{\mF}^{\mathrm{T}}$, and $\sigma_{\text{max}}(\cdot)$ returns the maximal sigular value of the input matrix.
\begin{eqnarray}
\nonumber
d_{\gM}(\mH\mW) &=& \|\hat{\mF}^{\mathrm{T}}(\mH\mW)\|_F \\
\nonumber
&=& \|(\hat{\mF}^{\mathrm{T}}\mH)\mW\|_F \\
\nonumber
&\leq& \sigma_{\text{max}}(\mW) d_{\gM}(\mH) \\
&=& s d_{\gM}(\mH).
\end{eqnarray}
\begin{eqnarray}
\nonumber
d_{\gM}(\alpha_1\mH+\alpha_2\mB) &=& \|\hat{\mF}^{\mathrm{T}}(\alpha_1\mH+\alpha_2\mB)\|_F \\
\nonumber
&\leq& \|\alpha_1\hat{\mF}^{\mathrm{T}}\mH\|_F +\|\alpha_2\hat{\mF}^{\mathrm{T}}\mB\|_F\\
&=& \alpha_1 d_{\gM}(\mH)+\alpha_2 d_{\gM}(\mB).
\end{eqnarray}
Notice that the above inequation can be extended for the vector $\vb\in\R^{1\times C}$ (such as the bias in GCN-b), and we define $d_{\gM}(\vb)=d_{\gM}(\mB)$ where $\mB\in\R^{N\times C}$ is broadcasted from $\vb$ in the first dimension. 

We now prove Ineq.~\ref{eq:relu}. As $\hat{\mE}$ is defined by the node indicator of connected components in Theorem~1, all elements in $\hat{\mE}$ are non-negative. Moreover, since each node can only belong to one connected component, the non-zero entries in different column $\ve_i$ of $\hat{\mE}$ are located in a non-overlap way. It means, Eq.~\ref{eq:dM-1} can be further decomposed as
\begin{eqnarray}
\label{eq:dM-2}
d_{\gM}(\mH) &=& \sum_{i=1}^M \|(\mI-\ve_i\ve_i^{\mathrm{T}})\mH_i\|_F,
\end{eqnarray}
where the $j$-th row of $\mH_i\in\R^{N\times C}$ is copied from  $\mH$ if $j$ belongs to component $i$ and is zero otherwise. Then,
\begin{eqnarray}
\label{eq:decompose}
\nonumber
d^2_{\gM}(\mH) &=& \sum_{i=1}^M \|(\mI-\ve_i\ve_i^{\mathrm{T}})\mH_i\|_F^2 \\
\nonumber
&=& \sum_{i=1}^M \text{tr}\left(\mH_i^{\mathrm{T}}(\mI-\ve_i\ve_i^{\mathrm{T}})^{2}\mH_i \right)\\
\nonumber
&=& \sum_{i=1}^M \text{tr}\left(\mH_i^{\mathrm{T}}(\mI-\ve_i\ve_i^{\mathrm{T}})\mH_i \right)\\
&=& \sum_{i=1}^M \sum_{c=1}^C \vh_{ic}^{\mathrm{T}}\vh_{ic}-(\vh_{ic}^{\mathrm{T}}\ve_i)^2,
\end{eqnarray}
where $\vh_{ic}\in\R^{N}$ denotes the $c$-th column of $\mH_i$. We further denote the non-negative and negative elements of $\vh_{ic}$ as $\vh_{ic}^{+}$ and $\vh_{ic}^{-}$. Similar to Eq.~\ref{eq:decompose}, we have
\begin{eqnarray}
\label{eq:decompose-s}
d_{\gM}^2(\sigma(\mH)) &=& \sum_{i=1}^M \sum_{c=1}^C (\vh_{ic}^{+})^{\mathrm{T}}\vh_{ic}^{+}-((\vh_{ic}^{+})^{\mathrm{T}}\ve_i)^2.
\end{eqnarray}
Then, we minus Eq.~\ref{eq:decompose-s} with Eq.~\ref{eq:decompose},
\begin{eqnarray}
\label{eq:relu-final}
\nonumber
&& d^2_{\gM}(\mH)-d^2_{\gM}(\sigma(\mH)) \\
\nonumber
&=& \sum_{i=1}^M \sum_{c=1}^C (\vh_{ic}^{-})^{\mathrm{T}}\vh_{ic}^{-}-((\vh_{ic}^{-})^{\mathrm{T}}\ve_i)^2\\
\nonumber
&& -2(\vh_{ic}^{-})^{\mathrm{T}}\ve_i(\vh_{ic}^{+})^{\mathrm{T}}\ve_i \quad (\vh_{ic}^{-}<0, \vh_{ic}^{+}\geq0, \ve_i\geq 0) \\
\nonumber
&\geq& \sum_{i=1}^M \sum_{c=1}^C (\vh_{ic}^{-})^{\mathrm{T}}\vh_{ic}^{-}-((\vh_{ic}^{-})^{\mathrm{T}}\ve_i)^2 \\
\nonumber
&\geq& \sum_{i=1}^M \sum_{c=1}^C (\vh_{ic}^{-})^{\mathrm{T}}\vh_{ic}^{-} - ((\vh_{ic}^{-})^{\mathrm{T}}\vh_{ic}^{-})(\ve_i^{\mathrm{T}}\ve_i),\\
&=& 0.
\end{eqnarray}
where the last inequation employs the Cauchy–Schwarz inequality. Hence, we have proved Ineq.~\ref{eq:relu}.
\end{proof}

Based on Lemma~\ref{le:property}, we can immediately justify Theorem~2 as follows.

For GCN in Eq.~1, we apply Ineq.~\ref{eq:A}, \ref{eq:W} and \ref{eq:relu},
\begin{eqnarray}
\label{eq:dm-gcn}
\nonumber
d_{\gM}(\mH_{l+1}) &\leq& d_{\gM}(\hat{\mA}\mH_l\mW_l) \\
\nonumber
&\leq& \lambda d_{\gM}(\mH_l\mW_l) \\
&\leq& s\lambda d_{\gM}(\mH_l). 
\end{eqnarray}

For GCN-b in Eq.~2, we apply Ineq.~\ref{eq:A}-~\ref{eq:bias},
\begin{eqnarray}
\label{eq:dm-gcn-b}
\nonumber
d_{\gM}(\mH_{l+1}) &\leq& s\lambda d_{\gM}(\mH_l)+d_{\gM}(\vb_l) 
\end{eqnarray}
\begin{eqnarray}
\nonumber
\Rightarrow & & d_{\gM}(\mH_{l+1})-\frac{d_{\gM}(\vb_l)}{1-s\lambda} \\
& \leq & s\lambda \left(d_{\gM}(\mH_l)-\frac{d_{\gM}(\vb_l)}{1-s\lambda}\right).
\end{eqnarray}

For ResGCN in Eq.~3, we apply Ineq.~\ref{eq:A}-~\ref{eq:bias},
\begin{eqnarray}
\label{eq:dm-resgcn}
\nonumber
d_{\gM}(\mH_{l+1}) &\leq&  s\lambda d_{\gM}(\mH_l)+\alpha d_{\gM}(\mH_l) \\
&=& (s\lambda+\alpha) d_{\gM}(\mH_l). 
\end{eqnarray}

For APPNP in Eq.~4, we apply Ineq.~\ref{eq:A} and \ref{eq:bias},
\begin{eqnarray}
\label{eq:dm-appnp}
\nonumber
d_{\gM}(\mH_{l+1}) &\leq& (1-\beta)\lambda d_{\gM}(\mH_l)+\beta d_{\gM}(\mH_0)
\end{eqnarray}
\begin{eqnarray}
\nonumber
\Rightarrow && d_{\gM}(\mH_{l+1})-\frac{\beta d_{\gM}(\mH_0)}{1-(1-\beta)\lambda} \\
 & \leq & (1-\beta)\lambda \left(d_{\gM}(\mH_l)-\frac{\beta d_{\gM}(\mH_0)}{1-(1-\beta)\lambda}\right).
\end{eqnarray}

Clearly, Ineq.~\ref{eq:dm-gcn}-\ref{eq:dm-appnp} imply the general form in Theorem~2.


\section{Proof of Theorem~3}
\label{appendix:theorem3}
Our proof basically explores the relationship between the Laplacian matrices of the original version and the one after DropEdge. We first provide the related notations for better readability.

\textbf{Notations.} 
We reuse the aforementioned definitions of the adjacency, the Laplacian, and the degree matrix as $\mA$, $\mL$, and $\mD$, respectively, and define these terms after DropEdge as $\mA_{\text{drop}}$, $\mL_{\text{drop}}$, and $\mD_{\text{drop}}$. We denote the re-normalized (adding self-loops) of each above symbol in a hatted form, such as $\hat{\mA}$ denoting the normalized augmented adjacency.

The expected adjacency matrix $\mA_{\text{drop}}$ by DropEdge (Eq.~6) is given by $\mA_{\text{drop}}=(1-p)\mA$. Then after re-normalization,
\begin{eqnarray}
\label{eq:re-norm-drop}
\nonumber
\hat{\mA}_{\text{drop}}&=&(\mD_{\text{drop}}+\mI)^{-\frac{1}{2}}(\mA_{\text{drop}}+\mI)(\mD_{\text{drop}}+\mI)^{-\frac{1}{2}} \\
\nonumber
&=& (\mD+\frac{\mI}{(1-p)})^{-\frac{1}{2}}(\mA+\frac{\mI}{(1-p)})(\mD+\frac{\mI}{(1-p)})^{-\frac{1}{2}} \\
&:=& \mD_p^{-\frac{1}{2}}\mA_p\mD_p^{-\frac{1}{2}},
\end{eqnarray}
where we define $\mA_p=\mA+\frac{\mI}{(1-p)}$ and its degree matrix $\mD_p=\mD+\frac{\mI}{(1-p)}$.  Eq.~\ref{eq:re-norm-drop} is indeed a general form of the re-normalization trick proposed by~\cite{Kipf2017}, where we obtain $\hat{\mA}_{\text{drop}}=\hat{\mA}$ when $p=0$ and assign more weights to the self-loops when $p>0$.  For consistent denotation in Eq.~\ref{eq:re-norm-drop}, we re-specify $\hat{\mA}_{\text{drop}}$ as $\hat{\mA}_p$ below.

We can easily check the correlation between $\hat{\mL}_p$ and $\hat{\mL}$ by:
\begin{eqnarray}
\label{eq:laplacian-drop}
\nonumber
\hat{\mL}_p&:=&\mI-\hat{\mA}_p \\
\nonumber
&=& \mD_p^{-\frac{1}{2}}(\mD_p-\mA_p)\mD_p^{-\frac{1}{2}} \\
\nonumber
&=& \mD_p^{-\frac{1}{2}}(\mD-\mA)\mD_p^{-\frac{1}{2}} \\
\nonumber
&=& \mD_p^{-\frac{1}{2}}\hat{\mD}^{\frac{1}{2}}(\mI-\hat{\mA})\hat{\mD}^{\frac{1}{2}}\mD_p^{-\frac{1}{2}} \\
&=& \mD_p^{-\frac{1}{2}}\hat{\mD}^{\frac{1}{2}}\hat{\mL}\hat{\mD}^{\frac{1}{2}}\mD_p^{-\frac{1}{2}}.
\end{eqnarray}

We now denote the eigenvalue of $\hat{\mA}_p$ as $\lambda(p)$, and its associated eigenvector as $\vx_p$. 
We immediately have
\begin{eqnarray}
\label{eq:lambda-bound}
\nonumber
\lambda(p) &=& |\frac{\vx_p^{\mathrm{T}}\hat{\mA}_p\vx_p}{\|\vx_p\|^2}| \\
\nonumber
&=& |1 - \frac{\vx_p^{\mathrm{T}}\hat{\mL}_p\vx_p}{\|\vx_p\|^2}| \\
\nonumber
&=& |1 - \frac{\vx_p^{\mathrm{T}}\mD_p^{-\frac{1}{2}}\hat{\mD}^{\frac{1}{2}}\hat{\mL}\hat{\mD}^{\frac{1}{2}}\mD_p^{-\frac{1}{2}}\vx_p}{\|\vx_p\|^2}| \quad(\text{via Eq.~\ref{eq:laplacian-drop}})\\
&=& |1- \frac{\vy_p^{\mathrm{T}}\hat{\mL}\vy_p}{\|\vy_p\|^2}\frac{\|\vy_p\|^2}{\|\vx_p\|^2}|,
\end{eqnarray}
where we set $\vy_p=\hat{\mD}^{\frac{1}{2}}\mD_p^{-\frac{1}{2}}\vx_p$. Let $a=\max_{\vy_p=\hat{\mD}^{\frac{1}{2}}\mD_p^{-\frac{1}{2}}\vx_p}|1-\frac{\vy_p^{\mathrm{T}}\hat{\mL}\vy_p}{\|\vy_p\|^2}|=\max_{\vy_p=\hat{\mD}^{\frac{1}{2}}\mD_p^{-\frac{1}{2}}\vx_p}|\frac{\vy_p^{\mathrm{T}}\hat{\mA}\vy_p}{\|\vy_p\|^2}|$. Clearly, $0\leq a\leq 1$, $0\leq \frac{\|\vy_p\|^2}{\|\vx_p\|^2}=\frac{\|\hat{\mD}^{\frac{1}{2}}\mD_p^{-\frac{1}{2}}\vx_p\|^2}{\|\vx_p\|^2} \leq 1$.

Hence, according to Eq.~\ref{eq:lambda-bound}, we arrive at
\begin{eqnarray}
\label{eq:final-bound}
\nonumber
\lambda(p) &=& |1-\frac{\vy^{\mathrm{T}}\hat{\mL}\vy}{\|\vy\|^2}\frac{\|\vy_p\|^2}{\|\vx_p\|^2}| \\
\nonumber
&=& |1-\frac{\|\vy_p\|^2}{\|\vx_p\|^2}+(1-\frac{\vy^{\mathrm{T}}\hat{\mL}\vy}{\|\vy\|^2})\frac{\|\vy_p\|^2}{\|\vx_p\|^2}| \\
\nonumber
&\leq& 1-\frac{\|\vy_p\|^2}{\|\vx_p\|^2}+a\frac{\|\vy_p\|^2}{\|\vx_p\|^2} \quad( |x+y|\leq|x|+|y|) \\
\nonumber
&=& 1- (1-a)\frac{\|\vy_p\|^2}{\|\vx_p\|^2} \\
\nonumber
&\leq& 1- (1-a)\min_{d_i}\frac{d_i+1}{d_i+1/(1-p)} \\
&:=& \gamma(p).
\end{eqnarray}

On the contrary, 
\begin{eqnarray}
\nonumber
\lambda(p) &=& |1-\frac{\|\vy_p\|^2}{\|\vx_p\|^2}+(1-\frac{\vy^{\mathrm{T}}\hat{\mL}\vy}{\|\vy\|^2})\frac{\|\vy_p\|^2}{\|\vx_p\|^2}| \\
\nonumber
&\geq& 1-\frac{\|\vy_p\|^2}{\|\vx_p\|^2}-a\frac{\|\vy_p\|^2}{\|\vx_p\|^2} \quad( |x+y|\geq|x|-|y|) \\
\nonumber
&=&  1-(1+a)\frac{\|\vy_p\|^2}{\|\vx_p\|^2} \\
\nonumber
&\geq& 1- (1+a)\max_{d_i}\frac{d_i+1}{d_i+1/(1-p)} \\
&:=& \mu(p).
\end{eqnarray}
Therefore, we have
\begin{eqnarray}
\label{eq:bound}
\mu(p)\leq\lambda\leq\gamma(p),
\end{eqnarray}
where, both $\mu(p)$ and $\gamma(p)$ monotonically increase in terms of $p$; the gap $\gamma(p)-\mu(p)=a(\max_{d_i}\frac{d_i+1}{d_i+1/(1-p)}+\min_{d_i}\frac{d_i+1}{d_i+1/(1-p)})$ monotonically decreases, and when $p=1$ (dropping all edges), $\mu(1)=\lambda(1)=\gamma(1)=1$.

Eq.~\ref{eq:bound} indicates that dropping edges probably increases the value of $\lambda(p)$ if $p$ is approaching 1. Actually, when $p$ is small, we have $a=\lim_{p\rightarrow 0}\max_{\vy_p}|\frac{\vy_p^{\mathrm{T}}\hat{\mA}\vy_p}{\|\vy_p\|^2}|=\lambda(0)$. Hence for small $\lambda$, the gap between $\gamma(p)$ and $\mu(p)$ is also small, and the monotonically-increasing property of $\lambda(p)$ w.r.t. $p$ holds more potentially.

\section{Proof of Theorem~4}

The proof is straightforward, since the number of connected components only increases if the edges connecting two different components are dropped.












\section{Models and Backbones}
\label{appendix:models}
\textbf{Backbones}
We employ one input GCL and one output GCL on ResGCN, APPNP, and JKNet. Therefore, the layers in ResGCN, APPNP and JKNet are at least 3 layers. 
All backbones are implemented in Pytorch~\cite{paszke2017automatic}.

\textbf{Self Feature Modeling}
We also implement a variant of graph convolution layer with self feature modeling \cite{fout2017protein}:
\begin{align}
    \mathbf{H}_{l+1} = \sigma\left(\hat{\mathbf{A}}\mathbf{H}_{l}\mathbf{W}_{l} + \mathbf{H}_{l}\mathbf{W}_{{\text{self}}_{l}}\right),
\end{align}
where $\mathbf{W}_{{\text{self}}_{l}}\in \mathbb{R}^{C_l\times C_{l-1}}$.




\


\begin{table}[h!]
  \centering
  \caption{Hyper-parameter Description}
  \label{tab:hyper-desc}
    \small
    \begin{tabular}{l|l}
    \hline
    Hyper-parameter & Description \\
    \hline
    lr    & learning rate \\
    weight-decay & L2 regulation weight \\
    sampling-percent     & edge preserving percent ($1-p$) \\
    dropout & dropout rate \\
    normalization & the propagation models \cite{Kipf2017} \\
    withloop & using self feature modeling \\
    withbn & using batch normalization  \\
    \hline
    \end{tabular}%
\end{table}%

\begin{table*}[h!]
  \centering
  \caption{The normalization / propagation models}
      \vspace{-2ex}
    \scriptsize
    \begin{tabular}{l|l|l}
    \hline
    Description & Notation & $\mA_{\text{drop}}$ \\
    \hline
    First-order GCN & FirstOrderGCN & $\mI + \mD^{-1/2}\mA\mD^{-1/2}$ \\
    Augmented Normalized Adjacency & AugNormAdj & $(\mD + \mI)^{-1/2} ( \mA + \mI ) (\mD + \mI)^{-1/2}$ \\
    Augmented Normalized Adjacency with Self-loop & BingGeNormAdj & $\mI + (\mD + \mI)^{-1/2} (\mA + \mI) (\mD + \mI)^{-1/2}$ \\
    Augmented Random Walk & AugRWalk & $(\mD + \mI)^{-1}(\mA + \mI)$\\
    \hline
    \end{tabular}%
  \label{tab:normalization}%
\end{table*}%

\begin{table*}[h!]
  \centering
  \scriptsize
  \caption{The hyper-parameters of best accuracy (\%) for each backbone on all datasets.}
    \small
    \begin{tabular}{cl|r|r|p{0.6\textwidth}}
    \hline
    \multicolumn{1}{l}{Dataset} & Backbone & \multicolumn{1}{l|}{nlayers} & \multicolumn{1}{l|}{Acc.} & Hyper-parameters \\
    \hline
    \multirow{6}[10]{*}{Cora} & GCN   & 4     & 86.60 & lr:0.0005, weight-decay:1e-5, sampling-percent:0.7, dropout:0.7, normalization:FirstOrderGCN \\
    \cline{2-5} & GCN-b   & 4 &  87.60 & lr:0.010, weight-decay:5e-3, sampling-percent:0.7, dropout:0.8, normalization:FirstOrderGCN \\
\cline{2-5}          & ResGCN & 4     & 87.00  & lr:0.001, weight-decay:1e-5, sampling-percent:0.1, dropout:0.5, normalization:FirstOrderGCN \\
\cline{2-5}          & JKNet & 16    & 88.00  & lr:0.008, weight-decay:5e-4, sampling-percent:0.2, dropout:0.8, normalization:AugNormAdj \\
\cline{2-5}          & APPNP & 64     & 89.10 & lr:0.006, weight-decay:5e-5, sampling-percent:0.4, dropout:0.1, normalization:AugRWalk, alpha:0.2\\

    \hline
    \multirow{6}[10]{*}{Citeseer} & GCN   & 4     & 79.00 & lr:0.01, weight-decay:5e-4,sampling-percent:0.1, dropout:0.8, normalization:AugRWalk, withloop, withbn \\
    \cline{2-5} & GCN-b   &     4  & 79.20  & lr:0.009, weight-decay:1e-3, sampling-percent:0.05, dropout:0.8,
normalization:BingGeNormAdj, withloop, withbn \\
\cline{2-5}          & ResGCN & 16    & 79.40 &  lr:0.001, weight-decay:5e-3, sampling-percent:0.5, dropout:0.3, normalization:BingGeNormAdj, withloop \\
\cline{2-5}          & JKNet & 8     & 80.20 &  lr:0.004, weight-decay:5e-5, sampling-percent:0.6, dropout:0.3, normalization:AugNormAdj, withloop \\
\cline{2-5}          & APPNP & 64     & 81.30 & lr:0.010, weight-decay:1e-5, sampling-percent:0.8, dropout:0.8, normalization:AugNormAdj, alpha:0.4 \\

    \hline
    \multirow{5}[10]{*}{Pubmed} & GCN   & 8     & 91.00 & lr:0.006, weight-decay:5e-4,sampling-percent:0.3, dropout:0.8, normalization: AugRWalk, withloop, withbn \\
    \cline{2-5} & GCN-b  &    4  & 91.30  & lr:0.010, weight-decay:1e-3, sampling-percent:0.3, dropout:0.5, normalization:BingGeNormAdj, withloop, withbn \\
\cline{2-5}          & ResGCN & 32    & 91.10 &  lr:0.003, weight-decay:5e-5, sampling-percent:0.7, dropout:0.8, normalization:AugNormAdj, withloop, withbn \\
\cline{2-5}          & JKNet & 64    & 91.60 &  lr:0.005, weight-decay:1e-4, sampling-percent:0.5, dropout:0.8, normalization:AugNormAdj, withloop,withbn \\
\cline{2-5}          & APPNP & 4     & 90.70 &  lr:0.008, weight-decay:1e-4,sampling-percent:0.8, dropout:0.1, normalization:FirstOrderGCN, alpha:0.4 \\

    \hline
    \multirow{5}[10]{*}{Reddit} & GCN   & 8     & 96.57 &  lr:0.005, weight-decay:1e-5, sampling-percent:0.7, dropout:0.2, normalization:FirstOrderGCN, withloop, withbn \\
    \cline{2-5} & GCN-b   &   4   & 96.71  &   lr:0.005, weight-decay:1e-4, sampling-percent:0.6, dropout:0.5, normalization:AugRWalk, withloop \\
\cline{2-5}          & ResGCN & 16    & 96.48 & lr:0.009, weight-decay:1e-5, sampling-percent:0.2, dropout:0.5, normalization:BingGeNormAdj, withbn \\
\cline{2-5}          & JKNet & 8     & 97.02 & lr:0.010, weight-decay:5e-5, sampling-percent:0.6, dropout:0.5, normalization:BingGeNormAdj, withloop,withbn \\
\cline{2-5}          & APPNP & 8     & 95.85 & lr:0.004, weight-decay:1e-5, sampling-percent:0.5, dropout:0.1, normalization:AugRWalk, alpha:0.1 \\
    \hline
    \end{tabular}%
  \label{tab:hyperparameterdetails}%
\end{table*}%


%
%

\ifCLASSOPTIONcaptionsoff
  \newpage
\fi



\bibliographystyle{IEEEtran}
\bibliography{IEEEabrv,ref}
%



%













